\newtheorem{theorem}{Theorem}
\newtheorem{lemma}{Lemma}
\newtheorem{remark}{Remark}
\newtheorem{definition}[theorem]{Definition}
\newtheorem{assumption}{Assumption}
\newcommand{\argmax}{\operatorname*{argmax}} 
\newcommand{\argmin}{\operatorname*{argmin}}
\newcommand{\V}[1]{V(#1)}
\newcommand{\states}{\mathcal{S}}
\newcommand{\actions}{\mathcal{A}}
\newcommand{\E}{\mathbb{E}}
\newcommand{\Epsilon}{\mathcal{E}}
\title{On Imitation in Mean-field Games}
\author{\begin{tabular}{ccc}
		\begin{tabular}{c}
			Giorgia Ramponi\\
			ETH AI center\\
			\texttt{giorgia.ramponi@ai.ethz.ch}
			\quad\\
			\quad
		\end{tabular} & 
		\begin{tabular}{c}
			Pavel Kolev\\
			MPI for Intelligent Systems\\
			\texttt{pavel.kolev@tuebingen.mpg.de}
			\quad\\
			\quad
		\end{tabular} & 
		\begin{tabular}{c}
			Olivier Pietquin\\
			Google DeepMind\\
			\texttt{pietquin@google.com}
			\quad\\
			\quad
		\end{tabular}\\
		\begin{tabular}{c}
			Niao He \\
			ETH Zurich \\
			\texttt{niao.he@inf.ethz.ch}
		\end{tabular} &
		\begin{tabular}{c}
			Mathieu Laurière \\
			NYU Shanghai \& Google DeepMind \\
			\texttt{mathieu.lauriere@nyu.edu}
		\end{tabular} &
		\begin{tabular}{c}
			Matthieu Geist \\
			Google DeepMind\\
			\texttt{mfgeist@google.com}
		\end{tabular}
\end{tabular}}
\begin{document}
\doparttoc 
\faketableofcontents 

\maketitle

\begin{abstract}
We explore the problem of imitation learning (IL) in the context of mean-field games (MFGs), where the goal is to imitate the behavior of a population of agents following a Nash equilibrium policy according to some unknown payoff function. IL in MFGs presents new challenges compared to single-agent IL, particularly when both the reward function and the transition kernel depend on the population distribution. In this paper, departing from the existing literature on IL for MFGs, we introduce a new solution concept called the Nash imitation gap. Then we show that when only the reward depends on the population distribution, IL in MFGs can be reduced to single-agent IL with similar guarantees. However, when the dynamics is population-dependent, we provide a novel upper-bound that suggests IL is harder in this setting. To address this issue, we propose a new adversarial formulation where the reinforcement learning problem is replaced by a mean-field control (MFC) problem, suggesting progress in IL within MFGs may have to build upon MFC.
\looseness=-1
\end{abstract}

\section{Introduction}
\label{sec:introduction}
Imitation learning (IL) is a popular framework involving an apprentice agent who learns to imitate the behavior of an expert agent by observing its actions and transitions. In the context of mean-field games (MFGs), IL is used to learn a policy that imitates the behavior of a population of infinitely-many expert agents that are following a Nash equilibrium policy, according to some unknown payoff function. Mean-field games are an approximation introduced to simplify the analysis of games with a large (but finite) number of identical players, where we can look at the interaction between a representative infinitesimal player and a term capturing the population's behavior. The MFG framework enables to scale to an infinite number of agents, where both the reward and the transition are population-dependent. The aim is to learn effective policies that can effectively learn and imitate the behavior of a large population of agents, which is a crucial problem in many real-world applications, such as traffic management \cite{elhenawy2015intersection,tanaka2018linearly,wang2020research}, crowd control \cite{dogbe2010modeling,achdou2019mean}, and financial markets \cite{carmona2021deep,carmona2020applications}.

IL in MFGs presents new challenges compared to single-agent IL, as both the (unknown) reward function and the transition kernel can depend on the population distribution. Furthermore, algorithms will depend on whether we can only observe the trajectories drawn from the Nash Equilibrium (NE) or if we can access the MFG itself, either driven by the expert population or the imitating one.

The main question we address is whether IL in MFGs is actually harder than IL in single-agent settings and if we can use single-agent techniques to solve IL in the MFGs framework. 

Although there exist IL algorithms for MFGs in the literature, none comes with a characterization of the quality of the learnt imitation policy. So as to compare algorithms on a rational basis, we provide an extension of the concept of imitation gap to this setting and study it. Our contributions are:

$\bullet$ We provide a commented review of the existing literature on IL for MFGs. Notably, we will explain that they essentially amount to a reduction to classic IL, and explain the underlying possible issues.

$\bullet$ We introduce a new solution concept for IL in MFGs called \textit{Nash imitation gap}, which is a strict generalization of the classic imitation gap and that we think may be more widely  applicable to Multi-agent Reinforcement Learning (MARL).

$\bullet$ In light of this new criterion, we first study the setting where only the reward depends on the population's distribution, while the dynamics does not. 
This setting was largely studied in the past few years \cite{gomes2010discrete,cardaliaguet2015mean,achdou2019mean,geist2022concave,perolat2022scaling}, and we show that in this case IL in MFGs reduces to single-agent IL with similar guarantees for Behavioral Cloning (BC) and Adversarial Imitation (ADV) 
type of algorithms.

$\bullet$ Then, we provide a similar analysis in the more general setting where the dynamics depends on the population's distribution. In this case, we provide for BC and ADV approaches upper-bounds that are exponential in the horizon, suggesting that IL is harder in this setting. On an abstract way,  all previous works of the existing literature correspond to this setting.

$\bullet$ Due to these negative results, we introduce a new proxy to the Nash imitation gap, for which we can derive a quadratic upper bound on the horizon. Then, we discuss how a practical algorithm could be designed with an adversarial learning viewpoint. The idea behind it is to use an approach similar to adversarial IL, where the underlying RL problem is replaced by a Mean-Field Control (MFC) problem. 
We leave the design and experimentation of practical algorithms for future works, but this suggests that making progress on IL in MFGs may have to build upon MFC.
We also provide a numerical illustration empirically supporting our claims in the appendix.

\section{Background} 

\subsection{Mean-field Games}\label{sec:mfg}
Intuitively, an MFG corresponds to the limit of an $N$-player game when $N$ tends towards infinity. We focus on the finite-horizon setting, in discrete time, and with finite state and action spaces. Mathematically, the MFG is defined by a tuple  $\mathcal{M}=(\mathcal{S}, \mathcal{A}, P, r, H, \rho_0)$ where $\states$ is a finite state space,  $\actions$ is a finite action space,  $P:\states\times\actions\times\Delta_\states \rightarrow \Delta_\states$ is a transition kernel,  $r:\states\times\actions\times\Delta_\states \rightarrow \mathbb{R}$ is a reward function, $H$ is a finite horizon and  $\rho_0\in\Delta_\states$ is a distribution over initial states. The first and second inputs of $P$ and $r$ represent respectively the individual player's state and action, while the third input represents the distribution of the population. This allows the model to capture interactions or mean field type. We denote $[H-1] = \{0,\dots,H-1\}$. 
Since the problem is set in finite time horizon, we consider non-stationary stochastic policies of the form $\pi = (\pi_0, \dots, \pi_{H-1})$ where for each $n \in [H-1]$, $\pi_n: \states \to \Delta_\actions$. 
This means that, at time $n$, a representative player whose state is $s_n \in \states$ picks an action according to $\pi_n(s_n)$. We will also view $\pi_n$ as a function from $\states \times \actions$ to $[0,1]$ and use the notation $\pi_n(a|s_n)$ to denote the probability to pick action $a$ according to $\pi_n(s_n)$. 
We also introduce the set of population-independent non-stationary rewards uniformly bound by 1, that will be useful later, $\mathcal{R} = \{r:\states\times\actions\times [H-1] \rightarrow[-1,1]\}$.

When a player is at state $s$ and uses action $a$ while the population distribution is $\rho$, it gets the reward $r(s,a,\rho)$. In a finite-player game, $\rho$ would be replaced by the empirical distribution of the other players' states. These players would be influenced by the player under consideration, leading to complex interactions. However, the influence of a single player on the population distribution becomes smaller as the number of players increases. In the limit, we can expect that each player has no influence on the distribution. We will use interchangeably the terms ``agent'' and ``player''.

The MFG framework allows us to formalize this idea.
So the problem faced by a single representative player is a Markov Decision Process (MDP), in which the distribution is fixed: Given a mean field sequence $\rho = (\rho_n)_{n \in [H-1]}$, the player wants to find a policy $\pi$ maximizing the value function $V$ defined by:
\begin{equation*}
    V(\pi, \rho) = \mathbb{E}\left[ \sum_{n=0}^{H-1} r(S_n, A_n, \rho_n)\right]
\end{equation*}
where $S_0$ is distributed according to the initial state distribution $\rho_0$, $A_n \sim \pi_n(S_n)$, and $S_{n+1} \sim P(S_n, A_n, \rho_n)$. 
The resulting policy is called a best response to the mean field $\rho$. We will sometime write $V_r(\pi,\rho)$ to make explicit to reward of the value function.

\begin{definition}[Best response]
A policy $\pi$ is a best response to $\rho$ if:  $V(\pi,\rho)=\max_{\pi^{\prime}}V(\pi^{\prime},\rho)$.
\end{definition}
Intuitively, $\rho$ is a Nash equilibrium if it is the distribution sequence obtained when all the players use a best response policy against $\rho$. 
To formalize this idea, we introduce the notion of population distribution sequence induced by a policy, that does influence the transition kernel.

\begin{definition}[Population distribution sequence]
We denote by $\rho^{(\pi)}$ the state-distribution sequence induced by the population following policy $\pi$, which is defined as:
\begin{equation}
    \left\{ 
    \begin{split}
    \rho_0^{(\pi)}(s) &= \rho_0(s), \qquad s \in \states
    \\
    \rho_{n+1}^{(\pi)}(s') &= \sum_s \rho_n^{(\pi)}(s) \sum_a \pi_n(a|s) P(s'|s,a,\rho_n^{(\pi)}), \qquad (n,s') \in [H-1] \times \states.
    \end{split}
    \right.
\end{equation}
The state-action distribution sequence, denoted by $\mu^{(\pi)}$, is defined as:
\begin{align}
    \mu_n^{(\pi)}(s,a) = \pi_n(a|s)\rho_n^{(\pi)}(s), \qquad  (n,s,a) \in [H-1] \times \states \times \actions.
\end{align}
\end{definition}
We can now give a formal definition of Nash equilibrium.\footnote{We will sometimes omit the term ``Nash'' and simply write ``equilibrium'' when the context is clear.
}
\begin{definition}[Nash equilibrium]
\label{def:nash-eq}
A policy $\pi$ is called a Nash equilibrium policy if it is a best response against $\rho^\pi$, i.e., $\pi$ is a maximizer of $\pi' \mapsto V(\pi',\rho^\pi)$. The distribution sequence $\rho^\pi$ induced by a Nash equilibrium policy $\pi$ is called a Nash equilibrium mean field sequence. 
\end{definition}

The value function can be written without expectation by introducing the state and state-action distributions for a single agent, which does not influence the transition kernel (only the population does, it is a core reason for considering the mean-field limit).  
\begin{definition}[Single-agent distribution sequence]
Consider an agent using policy $\pi'$ who evolves among a population using policy $\pi$. We denote by $\rho^{(\pi)\pi'}$ the state-distribution sequence of this single agent, which is defined by:
\begin{equation}
    \left\{ 
    \begin{split}
    \rho_0^{(\pi)\pi'}(s) &= \rho_0(s), \qquad s \in \states
    \\
    \rho_{n+1}^{(\pi)\pi'}(s') &= \sum_s \rho_n^{(\pi)\pi'}(s) \sum_a \pi'_n(a|s) P(s'|s,a,\rho_n^{(\pi)}), \qquad (n,s') \in [H-1] \times \states.
    \end{split}
    \right.
\end{equation}
The state-action distribution sequence is defined as:
\begin{equation}
    \mu_n^{(\pi)\pi'}(s,a) = \pi'_n(a|s) \rho^{(\pi)\pi'}_n(s), \qquad  (n,s,a) \in [H-1] \times \states \times \actions.
\end{equation}
\end{definition}
From these definitions we directly have that $\rho^{(\pi)\pi} = \rho^{(\pi)}$ and $\mu^{(\pi)\pi} = \mu^{(\pi)}$. Furthermore, for two policies $\pi$ and $\pi'$,  $V(\pi',\rho^{(\pi)})=\sum_{n=0}^{H-1} \sum_{s,a} \mu_n^{(\pi)\pi'}(s,a) r(s,a,\rho_n^{(\pi)})$. 

\begin{definition}[Exploitability]
The exploitability of a policy $\pi$ quantifies the gain for a representative player to replace its policy by a best response:
\begin{equation}
    \Epsilon(\pi) = \max_{\pi'} V(\pi',\rho^{(\pi)}) - V(\pi, \rho^{(\pi)}).
\end{equation}
\end{definition}
A Nash equilibrium policy can be defined equivalently as a policy such that its exploitability is $0$. 

Finding a Nash Equilibrium policy is different from trying to maximize the value function, which can be interpreted as a social optimum. This problem is sometimes referred to as mean field control problem (MFC) because it can be interpreted as an optimal control for an MDP where the state is augmented with the distribution. 
\begin{definition}[Social optimum]
\label{def:social-opt}
A policy $\pi$ is socially optimal if $V(\pi, \rho^\pi) = \max_{\pi'}V(\pi',\rho^{\pi'})$.
\end{definition}
In general the Nash equilibrium and the social optimum do not coincide, i.e., the MFG policy and the MFC solution can be different. In other words, if $\pi$ is an MFG solution, then we might have $V(\pi, \rho^\pi) < \max_{\pi'}V(\pi',\rho^{\pi'})$, and if $\pi$ is an MFC policy, then we might have $\Epsilon(\pi) \ne 0$.

In this work,  we will frequently make two common assumptions (e.g., \cite[Asm.~1]{anahtarci2023q}, or \cite[Asm.~1]{yardim2022policy}).
\begin{assumption}
\label{asm:lipschitz}
The reward function $r$ and the transition kernel $P$ are Lipschitz continuous w.r.t. the population distribution and have corresponding Lipschitz constants $L_r$ and $L_P$. 
In particular, for any state-action pair $(s,a)$ it holds for any state distributions $\rho,\rho' \in \Delta_\states$ that
\begin{align}
|r(s,a,\rho) - r(s,a,\rho')| &\leq L_r \|\rho - \rho'\|_1.
\\
 \|P(\cdot|s,a,\rho) - P(\cdot|s,a,\rho')\|_1 &\leq L_P\|\rho - \rho'\|_1.
\end{align}
\end{assumption}
These assumptions mean that the reward and the transitions depend on the mean field in a smooth way. For example, $r(s,a,\rho) = - c_r \rho(s) + \tilde{r}(s,a)$ satisfies the assumption with $L_r = c_r$. 
A reward of this form penalizes the agent for being in a crowded state. 
In particular, these assumptions imply that $r$ and $P$ are continuous with respect to the distribution and since $\states$ and $\actions$ are finite sets and we consider a discrete-time, finite horizon MFG, these assumptions are sufficient to ensure existence of a Nash equilibrium, see e.g.~\cite[Proposition 1]{cui2021approximately}.

Note that these assumptions do not assume that $L_r$ or $L_P$ need to be small. However, we will consider separately the case $L_P = 0$, which has received attention in the MFG literature and which corresponds to situations where the mean-field interactions occur through the reward only. 
Note that when $L_P=L_r=0$, then there are no interactions so the Nash equilibrium condition becomes trivial and the problem reduces to a single agent MDP. We will write the value function as $\V{\pi}$.

\subsection{Classic imitation learning}\label{sec:classic-IL}

The single-agent setting is a special case of the above, with $L_P=L_r=0$. In this case, the Imitation Learning (IL) problem has been extensively studied \cite{osa2018algorithmic}.  
In IL we assume that we observe state-action trajectories generated by an expert who is using an optimal policy $\pi^E$, i.e., $\pi^E \in \argmax \V{\pi}$. 
We do not know $\pi^E$, $V$, $r$ nor $P$. The goal is to learn a policy, denoted by $\pi^A$, which performs as well as the expert policy $\pi^E$ according to the unknown reward: $\V{\pi^A} = \V{\pi^E} = \max_{\pi'}\V{\pi^{\prime}}$.

Given the imitation policy $\pi^A$, the \emph{imitation gap} is a non-negative quantity defined by $\V{\pi^E} - \V{\pi^A}$ \cite{swamy2021}. 
The goal is to learn a policy $\pi^A$ whose imitation gap is as close to $0$ as possible. 
However, when we do not know the model, this quantity cannot be minimized directly. 
For this reason, different methods were introduced in literature. 
Here we focus on the following two prominent methods: Behavioral Cloning (BC) \cite{pomerleau1991efficient} and adversarial IL such as Generative Adversarial IL (GAIL) \cite{ho2016generative}.

For simplicity, we will denote $\rho^E = \rho^{(\pi^E)\pi^E}$, $\mu^E = \mu^{(\pi^E)\pi^E}$, $\rho^A = \rho^{(\pi^A)\pi^A}$, and $\mu^A = \mu^{(\pi^A)\pi^A}$. When the dynamics does not depend on the population ($L_P=0$), we will write $\rho^{\pi} = \rho^{(\pi^{\text{any}})\pi}$, $\mu^{\pi} = \mu^{(\pi^{\text{any}})\pi}$, as the population driven by $\pi^{\text{any}}$ has no effect on the transition kernel.

\textbf{Behavioral Cloning. } 

In this work, we frame BC as minimizing the expected $\ell_1$-distance between the action probability distributions of the expert and the imitation policy, where the expectation is over the expert state occupancy. Although in practice it consists in a reverse KL-divergence, or 
equivalently as the maximum likelihood estimation in supervised learning, here we consider the $\ell_1$-norm distance (or scaled total variation), which is more convenient for the analysis. However, the analysis could be adapted to this maximum likelihood estimation with minor changes thanks to Pinsker's inequality. 
In the finite-horizon case, one obtains a bound on quantities of the form:
\begin{equation}
\label{eq:BC}
    \epsilon^{\mathrm{BC}}_n := \E_{s\sim \rho_n^E}[\|\pi^A_n(s) - \pi^E_n(s)\|_1], \qquad n \in \{0, \dots, H-1\}.
\end{equation}
That is, we solve one BC problem per time-step, from data generated by an expert. The single-agent imitation-learning gap is bounded by $\mathcal{O}\left(\epsilon^{\mathrm{BC}} H^2\right)$, where $\epsilon^{\mathrm{BC}} = \max_{n\in\{0,\dots,H-1\}}\epsilon^{\mathrm{BC}}_n$ \cite{ross2010efficient}.
We will retrieve this result as a special case of our analysis.

\textbf{Adversarial Imitation Learning.}\label{subsubsec:gail-like}
Using adversarial-like approaches \cite{ho2016generative,garg2021iq} we control a distance or divergence between the state-action occupancy measures of the expert and the imitation policy. In the single-agent IL setting this divergence can be expressed as: 
\begin{equation}
\label{eq:GAIL}
\epsilon^{\mathrm{ADV}}_n := \|\mu^E_n - \mu^A_n\|_1\,, \qquad  n \in \{0, \dots, H-1\}.
\end{equation}
Similar IL quantities are extensively studied in the single-agent case, leading to algorithms minimizing a divergence or distance such as Generative Adversarial Imitation Learning \cite{ho2016generative} or IQ-learn \cite{garg2021iq}. If we control the state-action occupancy measure errors, then we get a single-agent imitation-learning gap of order $\mathcal{O}\left(\epsilon^{\mathrm{ADV}}H\right)$, where $\epsilon^{\mathrm{ADV}} = \max_{n\in \{0,\dots,H-1\}}\epsilon^{\mathrm{ADV}}$, gaining an $H$ factor compared to the BC bound~\cite{xu2020error}. 

To make the link to adversarial approaches, controlling the terms of Eq.~\eqref{eq:GAIL} can be achieved through the following equivalent formulation, based on the Integral Probability Metric (IPM) formulation of the total variation (see Appx.~\ref{sec:ipm} for a detailed derivation and additional discussions):
\begin{equation}
    \label{eq:IPM-classic}
    \min_\pi \sum_{n=0}^{H-1}\|\mu_n^E - \mu_n^\pi\|_1 
    = \max_{f\in\mathcal{R}} \min_\pi (V_f(\pi^E) - V_f(\pi)),
\end{equation}
with $\mathcal{R}$ defined in Sec.~\ref{sec:mfg} (non-stationary population-independent rewards functions uniformly bounded by 1). This form is similar to classic adversarial imitation approach, learning both a reward function and a policy, the inner problem being a reinforcement learning (RL) problem.
Previous work usually consider an $f$-divergence \cite{ghasemipour2020divergence,ke2021imitation} between state-action occupation measures, which leads to different min-max problems. However, in this paper, we will focus on the $\ell_1$-distance, that we think is a meaningful and practical abstraction of adversarial approaches. Our results could be extended to other settings by using tools like the Pinsker inequality.

\begin{remark}
Remember that in the single-agent case (same as when we have $L_P=L_r=0$) the transition dynamics and the reward function do not depend on the population distribution. In the mean-field game, on the other hand, the transition dynamics and the reward function can depend on the population distribution. For this reason, we need to consider different quantities to be controlled. We will see more about it in the next section.
\end{remark}

\section{Related works}
\label{sec:related}
As a first contribution, we provide a commented review of the literature focusing on the fundamentals of the different approaches, their pros and cons.

To the best of our knowledge, the first work addressing the IL problem in MFGs is \cite{yang2017learning}. The authors consider a discrete-time MFG over a complete graph, and they propose a reduction from MFGs to finite-horizon MDP (with a population-augmented state) and then use single-agent IL algorithms on the new MDP. In the reduction, the new reward function is computed in the following way: $\overline{r}(\rho,\pi) = \sum_{s}\rho(s)\sum_{a}\pi(a|s)r(s,a,\rho)$, considering that the state is the population distribution $\rho$ and the actions are the possible policies $\pi$. Using this reduction, the authors implicitly assume that the observed expert is solving an MFC problem, i.e., she is looking for a socially optimal policy (see Def.~\ref{def:social-opt}). However, in general, this does not actually coincide with a Nash Equilibrium policy (see Def.~\ref{def:nash-eq}). Then, the reduction works only for cooperative MFGs, but it is prone to biased reward inference in non-cooperative environments.

The first work enlightening this issue is \cite{chen2022individual}. The authors consider a discounted finite-horizon MFG and propose a novel method called Mean Field IRL (MFIRL). They reframe the problem as finding a reward function that makes the expert policy $\pi^E$ the best response with respect to the expert population distribution $\rho^E$, i.e., find a population-dependent $r$ such that $V(\pi^E, \rho^E) \in \argmax_{\pi}V(\pi, \rho^E)$. 
To solve this problem they use a max-margin approach similar to \cite{ratliff2006maximum} in the single-agent case. In fact, fixing the population distribution $\rho^E$, the MFG is reduced to an MDP, and the imitation problem is reduced to single-agent IL. However, since they do not have access to the actual population distribution $\rho^E$, they need to estimate it from samples. 
In this way, the authors do not have any actual theoretical guarantee on the performances of the recovered policy, since it depends on the estimation of $\rho^E$. 
\looseness=-1

Recently, \cite{chen2021adversarial} proposed a novel approach reusing ideas from adversarial learning \cite{ho2016generative} and maximum entropy \cite{ziebart2008maximum}. 
The authors assume they have access to an expert optimizing a mean-field-regularized Nash-equilibrium, i.e., the value-function has an additional regularization term: $V(\pi^{\prime},\rho^{\pi})=\sum_{n=0}^{H-1}\sum_{s}\big\{\mathcal{H}(\pi^{\prime}(\cdot|s))+\sum_{a}\mu_{n}^{(\pi)\pi^{\prime}}(s,a)r(s,a,\rho_{n}^{(\pi)})\big\}$, 
with $\mathcal{H}$ the Shannon entropy.
This term guarantees that the MFG has a unique equilibrium solution, and  only one best response policy.  
Then, they assume to have access to the population distribution $\rho^E$, or to be able to estimate this population distribution from samples. 
Fixing the population distribution, the MFG problem reduces to the single-agent setting, which allows applying GAIL \cite{ho2016generative} or maximum entropy IRL to learn an approximate policy $\pi^A$. 
This leads to two main problems. (i) We are actually estimating $\rho^E$ from samples, trying to find a policy which gives us the true $\rho^E$; this circulating reasoning leads to many estimation errors. (ii) From an abstract viewpoint, the final theoretical guarantee we can hope to have for the recovered policy $\pi^A$ would approximately be $\|\mu^{(\pi^E)\pi^A} - \mu^{(\pi^E)\pi^E}\|_1$ (in practice for a different distance or divergence, but involving the same occupancy measures), which may not be sufficient as we will show in Sec.~\ref{sec:pop-dependent}. 
We will see that the upper-bound for this case has an exponential dependency on the horizon. 

In \cite{zhao2023imitation}, the authors assume observing an MFG where the agents are acting according to a correlated equilibrium. The authors justify the study of a correlated equilibrium by some applications  where we can have access to some correlation device (e.g., traffic network equilibrium induced from the public routing recommendations). 
Similar to \cite{chen2021adversarial}, the authors of this work reuse ideas from adversarial learning \cite{ho2016generative} to recover a policy observing a correlated equilibrium policy. As in \cite{chen2021adversarial} they fix the population distribution to be the expert one and then apply single-agent GAIL to the problem. Then, although the solution concept is different, this approach faces similar problems as in \cite{chen2021adversarial}, where instead the authors considered to observe an expert following a Nash Equilibrium policy. 
We will focus on experts achieving a Nash equilibrium. Extending our results (see Sec.~\ref{sec:imitation-mfg}) to more general equilibria such as coarse correlated ones is an interesting future research direction.

Notice also that all GAIL-like approaches discussed above learn an intermediate population-dependent reward function. This is complicated (as the population is a distribution over the state space, generally difficult to represent compactly), and in fact superfluous in their setting. Indeed, as all these approaches assume that the interaction is done with the MFG driven by the expert population (or a given approximation of it), it is sufficient to consider an intermediate population-independent but non-stationary reward. This will appears clearly in the IPM formulations we propose in Sec.~\ref{sec:imitation-mfg}.

\section{Nash imitation gap and imitation in MFGs}\label{sec:imitation-mfg}

We consider the imitation learning problem in MFGs. 
Similar to single-agent IL, we observe the interactions between an expert and a fixed MFG environment, for which we do not know the reward function $r$ nor the transition kernel $P$. 
We only observe samples coming from an expert policy denoted by $\pi^E$, which we assume to be a Nash equilibrium policy, i.e., $\Epsilon(\pi^E) = 0$. 
In single-agent IL the goal is clear: Find a policy $\pi^A$ to minimize the imitation gap $\max_{\pi} V(\pi) - V(\pi^A)$.  
However, in MFG, the goal is less clear. Previous works focused on solving IL for a single agent in a population that stays at equilibrium, but this is not relevant for many applications, in which the population might use the learnt policy. The learnt policy should thus not only be good at the individual level, but it should also be an equilibrium policy. 
As a first contribution, we propose a natural formulation for studying the performance of imitation learning in MFG, called Nash imitation gap:
\begin{definition}
The Nash imitation gap (NIG) of a policy $\pi^A$ is defined as: 
\begin{equation}
    \Epsilon(\pi^A) = \max_{\pi' \in \Pi} V(\pi', \rho^{(\pi^A)}) - V(\pi^A, \rho^{(\pi^A)}).
\end{equation}
\end{definition}
Therefore, the NIG is simply defined as being the exploitability of the considered policy.
The NIG has many interesting and useful properties. (i) If it is zero, the recovered policy $\pi^A$ is a Nash equilibrium policy. (ii) It is a generalization of the single-agent imitation gap (see above and Section~\ref{sec:classic-IL}),
which is recovered as a special case when $L_P=L_r=0$.

As for the single-agent imitation gap, we cannot optimize it directly, since we do not know the reward function, but instead we can envision proxies, such as reducing the distance between the recovered policy $\pi^A$ and the expert policy $\pi^E$ (BC-like) or their occupancy measures (GAIL-like) as in the classic IL setting (see Sec. \ref{sec:classic-IL}). 
In this section, we first discuss the imitation learning problem when the dynamics does not depend on the population, i.e., $L_P=0$, a common setting largely explored in the last years \cite{perolat2022scaling,perrin2020fictitious}. 
Then, we present our results for the general setting when the dynamics depends on the population, i.e., when $L_P > 0$.

For what follows, in addition to the Lipschitz assumption (Asm.~\ref{asm:lipschitz}), we will also assume that the unknown reward function $r$ for which the expert $\pi^E$ is a Nash equilibrium is uniformly bounded when the population is the expert one.

\begin{assumption}
    \label{asm:reward}
   The unknown reward $r$ satisfies
    \begin{equation}
        \max_{s,a}|r(s,a,\rho^{(\pi^E)})|\leq r_\text{max}.
    \end{equation}
\end{assumption}

\subsection{Population-independent dynamics: a reduction to classic imitation}\label{sec:pop-independent}

We start by analyzing a simpler but commonly used setting (e.g., see \cite{perrin2020fictitious} in the context of reinforcement learning methods, or \cite{gomes2010discrete,cardaliaguet2015mean,achdou2019mean} in the context of the analysis of discrete or continuous space MFGs), where the dynamics does not depend on the population. 
Here the MFG interaction is without reward and it is only the (unknown) reward function that depends on the population. 
This setting is equivalent to observing an interaction between an expert and an MDP, since the state-distribution does not depend on the population, i.e., $\rho^{(\pi)\pi'}=\rho^{\pi'}$.

\textbf{Behavioral Cloning.}  The behavioral cloning setting is the same as the single agent setting (see Sec. \ref{sec:classic-IL}), i.e, we control:
\begin{equation}
\label{eq:BC-MFG}
    \epsilon_n^\mathrm{BC} := \E_{s\sim \rho_n^E}[\|\pi^A_n(s) - \pi^E_n(s)\|_1], \qquad n\in\{0,\dots,H-1\},
\end{equation}
where $\pi^E$ is the expert policy and $\pi^A$ the imitation policy. 
Under the assumption of the BC-type error, we give the following bound on the Nash imitation gap (proof in Appx.~\ref{sec:proofs}).
\begin{restatable}{thm}{BCnotdep}
\label{thm:bound-bc}
    Let $\epsilon^\mathrm{BC} = \max_{n\in\{0, \dots, H-1\}}\epsilon^\mathrm{BC}_n$.
    If $L_P=0$, the Nash imitation gap satisfies 
    \begin{equation}
        \Epsilon(\pi^A) \leq H^2(r_\text{max} + 2 L_r) \epsilon^{\mathrm{BC}}.
    \end{equation}
\end{restatable}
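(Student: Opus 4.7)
The plan is to decompose the Nash imitation gap into a standard single-agent BC-style term evaluated under the expert's reward-inducing population, plus two ``Lipschitz'' correction terms that account for the mismatch between the populations $\rho^{(\pi^A)}$ and $\rho^{(\pi^E)}$. Since $L_P = 0$, all single-agent state distributions become population-independent, so $\rho^{(\pi^A)\pi'} = \rho^{\pi'}$ and the bookkeeping simplifies considerably.

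Write $\pi^{\star} \in \argmax_{\pi'} V(\pi', \rho^{(\pi^A)})$. Inserting $\pi^E$ and $\rho^{(\pi^E)}$, I decompose
\begin{align*}
\Epsilon(\pi^A) &= V(\pi^{\star}, \rho^{(\pi^A)}) - V(\pi^A, \rho^{(\pi^A)})
\\
&= \bigl[V(\pi^{\star}, \rho^{(\pi^A)}) - V(\pi^{\star}, \rho^{(\pi^E)})\bigr] + \bigl[V(\pi^{\star}, \rho^{(\pi^E)}) - V(\pi^E, \rho^{(\pi^E)})\bigr]
\\
&\quad + \bigl[V(\pi^E, \rho^{(\pi^E)}) - V(\pi^A, \rho^{(\pi^E)})\bigr] + \bigl[V(\pi^A, \rho^{(\pi^E)}) - V(\pi^A, \rho^{(\pi^A)})\bigr].
\end{align*}
The second bracket is non-positive by the Nash equilibrium property of $\pi^E$ (Definition of best response applied to $\rho^{(\pi^E)}$). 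The first and fourth brackets are bounded via Assumption~\ref{asm:lipschitz}: for any policy $\pi'$, since $\mu_n^{\pi'}$ is a probability distribution over $\states \times \actions$,
\[
|V(\pi', \rho^{(\pi^A)}) - V(\pi', \rho^{(\pi^E)})| \le L_r \sum_{n=0}^{H-1} \|\rho_n^{(\pi^A)} - \rho_n^{(\pi^E)}\|_1.
\]
The third bracket is the standard single-agent imitation gap for the fixed (population-independent) reward $r_n(s,a) := r(s,a, \rho_n^{(\pi^E)})$, which is bounded by $r_\text{max}$ by Assumption~\ref{asm:reward}, so it is upper-bounded by $r_\text{max} \sum_n \|\mu_n^{\pi^E} - \mu_n^{\pi^A}\|_1$.

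The key estimate is the occupancy-measure drift. Using $L_P = 0$ and the triangle decomposition $\mu_n^\pi(s,a) = \pi_n(a|s)\rho_n^\pi(s)$, I get
\[
\|\mu_n^{\pi^A} - \mu_n^{\pi^E}\|_1 \le \|\rho_n^{\pi^A} - \rho_n^{\pi^E}\|_1 + \epsilon_n^{\mathrm{BC}},
\]
and the one-step push-forward through the (population-free) kernel $P$ gives $\|\rho_{n+1}^{\pi^A} - \rho_{n+1}^{\pi^E}\|_1 \le \|\mu_n^{\pi^A} - \mu_n^{\pi^E}\|_1$. Starting from $\rho_0^{\pi^A} = \rho_0^{\pi^E} = \rho_0$, a straightforward induction yields $\|\rho_n^{\pi^A} - \rho_n^{\pi^E}\|_1 \le n\,\epsilon^{\mathrm{BC}} \le H\,\epsilon^{\mathrm{BC}}$, and hence $\|\mu_n^{\pi^A} - \mu_n^{\pi^E}\|_1 \le (n{+}1)\epsilon^{\mathrm{BC}} \le H\epsilon^{\mathrm{BC}}$ as well. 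Summing over $n \in [H{-}1]$ gives $\sum_n \|\rho_n^{\pi^A} - \rho_n^{\pi^E}\|_1, \sum_n \|\mu_n^{\pi^A} - \mu_n^{\pi^E}\|_1 \le H^2 \epsilon^{\mathrm{BC}}$.

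Plugging these into the four-term decomposition, the two Lipschitz brackets each contribute at most $L_r H^2 \epsilon^{\mathrm{BC}}$, and the imitation bracket contributes at most $r_\text{max} H^2 \epsilon^{\mathrm{BC}}$, yielding the claimed bound $\Epsilon(\pi^A) \le H^2 (r_\text{max} + 2L_r) \epsilon^{\mathrm{BC}}$. The only real subtlety is choosing the decomposition so that the Nash property cleanly cancels the ``counterfactual'' best-response term; everything else is Lipschitzness of $r$ and the standard one-step occupancy drift, which is particularly clean here because $L_P = 0$ makes the state distribution a function of the policy alone.
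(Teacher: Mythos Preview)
Your proof is correct and follows essentially the same route as the paper: the paper's Lemma~\ref{lem:value_diff} performs exactly your four-bracket decomposition (their $A_1,A_2,B_1,B_2$ are your four terms, with $A_2\le 0$ by the Nash property), and when $L_P=0$ the population-dependent occupancy terms vanish just as you observe, leaving $2L_r\sum_n\|\rho_n^A-\rho_n^E\|_1 + r_{\max}\sum_n\|\mu_n^E-\mu_n^A\|_1$. Your occupancy-drift induction is precisely the paper's Lemma~\ref{lem:bc-error-flow}, so the two arguments coincide.
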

Theorem~\ref{thm:bound-bc} shows that when $L_P=0$ we recover the single-agent imitation learning bound.
Perhaps surprisingly, the simple BC approach was not previously considered for MFG, to the best of our knowledge.

\textbf{Adversarial learning.} In the adversarial setting, similar to the single-agent case (see Sec. \ref{sec:classic-IL}) we control a distance or divergence between occupancy measures. 
More precisely, we consider the following distance between occupancy measures: 
\begin{equation}
\label{eq:conc}
    \epsilon_n^\mathrm{ADV} := \|\mu^{\pi^A}_n - \mu^{\pi^E}_n\|_1, \qquad n \in \{0,\dots,H-1\}.
\end{equation}
It is important to recall that in this setting, the dynamics do not depend on the population, thus $\mu^{(\pi)\pi''} = \mu^{(\pi')\pi''}$ for every triplet of policies $(\pi, \pi', \pi'')$. 

Since in this case the dynamics do not depend on the population distribution, the same IPM approach (see Eq. \eqref{eq:IPM-classic}) of single-agent imitation-learning also works in this context. Then we can practically solve the MFG IL problem using GAIL-like approaches \cite{ho2016generative,ghasemipour2020divergence,ke2021imitation,garg2021iq}.

We now provide a novel bound on the Nash imitation gap  (proof in Appx.~\ref{sec:proofs}).
\begin{restatable}{thm}{popindadv}
\label{thm:bound-ADV}
Let $\epsilon^\mathrm{ADV} = \max_{n\in\{0,\dots,H-1\}}\epsilon^\mathrm{ADV}_n$.
If $L_P=0$, the Nash imitation gap satisfies
\begin{equation*}
\Epsilon(\pi^A) \le (2L_r +r_{\max}) H \epsilon^\mathrm{ADV}.
\end{equation*}
\end{restatable}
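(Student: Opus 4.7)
The plan is to decompose the Nash imitation gap into four pieces by inserting the expert value $V(\pi^E, \rho^{(\pi^E)})$, and then bound each piece either by the adversarial error $\epsilon^{\mathrm{ADV}}$, by the Lipschitz constant $L_r$, or by the Nash equilibrium property of $\pi^E$. Since $L_P = 0$, the key simplification is that the single-agent occupancy measures $\mu^{(\pi)\pi''} = \mu^{\pi''}$ and state distributions $\rho^{(\pi)\pi''} = \rho^{\pi''}$ are independent of the population-driving policy, so $V(\pi'', \rho^{(\pi)}) = \sum_n \sum_{s,a} \mu_n^{\pi''}(s,a) r(s,a, \rho_n^{(\pi)})$.

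Concretely, let $\pi^{\mathrm{BR}}$ be a best response to $\rho^{(\pi^A)}$, so that $\Epsilon(\pi^A) = V(\pi^{\mathrm{BR}}, \rho^{(\pi^A)}) - V(\pi^A, \rho^{(\pi^A)})$. I would then write
\begin{align*}
\Epsilon(\pi^A) &= \bigl[V(\pi^{\mathrm{BR}}, \rho^{(\pi^A)}) - V(\pi^{\mathrm{BR}}, \rho^{(\pi^E)})\bigr] + \bigl[V(\pi^{\mathrm{BR}}, \rho^{(\pi^E)}) - V(\pi^E, \rho^{(\pi^E)})\bigr] \\
&\quad + \bigl[V(\pi^E, \rho^{(\pi^E)}) - V(\pi^A, \rho^{(\pi^E)})\bigr] + \bigl[V(\pi^A, \rho^{(\pi^E)}) - V(\pi^A, \rho^{(\pi^A)})\bigr],
\end{align*}
calling these terms $T_1, T_2, T_3, T_4$ in order. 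The second term $T_2$ is non-positive because $\pi^E$ is a Nash equilibrium, hence a best response against $\rho^{(\pi^E)}$. The third term $T_3$ compares the expert and apprentice on the \emph{same} population $\rho^{(\pi^E)}$, so using the sum-over-occupancies formula it equals $\sum_n \sum_{s,a} (\mu_n^{\pi^E} - \mu_n^{\pi^A})(s,a)\, r(s,a,\rho_n^{(\pi^E)})$, which by Assumption \ref{asm:reward} and Hölder is bounded by $r_{\max} \sum_n \|\mu_n^{\pi^E} - \mu_n^{\pi^A}\|_1 \le r_{\max} H \epsilon^{\mathrm{ADV}}$.

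For the two remaining terms, the integrand becomes a difference of rewards at the \emph{same} $(s,a)$ but different populations, so Assumption \ref{asm:lipschitz} (with $L_r$) gives $T_1, T_4 \le L_r \sum_n \|\rho_n^{(\pi^A)} - \rho_n^{(\pi^E)}\|_1$, after using that the occupancy mass $\mu_n^{\pi^{\mathrm{BR}}}$ (resp.\ $\mu_n^{\pi^A}$) sums to one in $(s,a)$. Finally, since the state marginal of $\mu_n^{\pi}$ is $\rho_n^{\pi}$ and since $L_P = 0$ means $\rho^{(\pi^A)} = \rho^{\pi^A}$ and $\rho^{(\pi^E)} = \rho^{\pi^E}$, marginalizing gives $\|\rho_n^{\pi^A} - \rho_n^{\pi^E}\|_1 \le \|\mu_n^{\pi^A} - \mu_n^{\pi^E}\|_1 \le \epsilon^{\mathrm{ADV}}$, so $T_1 + T_4 \le 2 L_r H \epsilon^{\mathrm{ADV}}$. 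Adding up yields $\Epsilon(\pi^A) \le (2L_r + r_{\max}) H \epsilon^{\mathrm{ADV}}$.

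The main subtlety, rather than any obstacle, is picking the right four-term decomposition: a naive bound $|V(\pi^{\mathrm{BR}}, \rho^{(\pi^A)}) - V(\pi^A, \rho^{(\pi^A)})|$ directly via occupancy differences would require comparing $\pi^{\mathrm{BR}}$ (an unknown best response) to $\pi^A$, which the ADV assumption does not control. Inserting $\pi^E$ routes the argument through the Nash property (killing $T_2$) and through the expert population (where the ADV hypothesis applies), leaving only the population-shift residuals $T_1, T_4$, which are precisely what $L_P = 0$ lets us bound by $\epsilon^{\mathrm{ADV}}$ via the state-marginal inequality.
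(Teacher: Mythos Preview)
Your proof is correct and follows essentially the same approach as the paper: the paper's Lemma~\ref{lem:value_diff} performs exactly your four-term decomposition (their $A_1, A_2, B_1, B_2$ are your $T_1, T_2, T_3, T_4$), drops $A_2$ by the Nash property, and when $L_P=0$ the bounds on the remaining three terms reduce to precisely the ones you wrote. The only cosmetic difference is that the paper first proves the decomposition in generality (valid for $L_P>0$) and then specializes, whereas you work directly in the $L_P=0$ case.
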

In contrast to the quadratic horizon dependence in Theorem~\ref{thm:bound-bc}, we derive here a linear horizon dependence.
Furthermore, the bound in Theorem~\ref{thm:bound-ADV} is almost the same (similarly to the BC case) as in the single-agent IL problem. 
In fact, it recovers the bound of \cite{xu2020error} by setting $L_r=0$.

\textbf{Discussion.}
When $L_P=0$ interacting with the MFG without reward amounts to interact with and MDP without reward, so from a practical aspect any classic IL approach could be used, including Dagger-like approaches \cite{ross2011reduction} that we do not analyse here. Our upper bounds
show that when the dynamics is independent of the population, IL for MFG has similar guarantees as in single-agent imitation learning.
In fact, our results suggest that a population-dependent unknown reward function affects the IL policy performance very moderately (through the $L_r$ constant). 
However, new challenges may arise if we also want to recover the reward function, as in the case of Inverse Reinforcement Learning.
We leave this interesting research direction for future work.

\subsection{Population-dependent dynamics}
\label{sec:pop-dependent}
When the dynamics of the MFG depend on the population, i.e., $L_P > 0$, then the previous results do not apply anymore. 
In this section, we present results on MFGs with population-dependent dynamics, for the same proxies introduced above (BC and adversarial).

\textbf{Behavioral Cloning.} The BC proxy to the NIG is the same as in Equation~\eqref{eq:BC-MFG}. 
In this case, however, the bound we get is no longer comparable to the classic IL setting (proof in Appx.~\ref{sec:proofs}).
\begin{restatable}{thm}{BCPopDep}
\label{thm:bound-bc-LP-non-zero}
    Let $\epsilon^\mathrm{BC} = \max_{n\in\{0,\dots,H-1\}}\epsilon^\mathrm{BC}_n$.
    If $L_P>0$, the Nash imitation gap satisfies:
    \begin{equation}
        \Epsilon(\pi^A) \leq \left(H^2r_\text{max} + 2(1+L_P)^H \frac{L_r + r_{\max}}{L^2_P}\right) \epsilon^\mathrm{BC}.
    \end{equation}
\end{restatable}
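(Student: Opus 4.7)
The plan is to telescope the Nash imitation gap through the intermediate pairs $(\pi^*,\rho^{(\pi^E)})$ and $(\pi^A,\rho^{(\pi^E)})$, where $\pi^*$ is a best response to $\rho^{(\pi^A)}$. Writing $\rho^A=\rho^{(\pi^A)}$ and $\rho^E=\rho^{(\pi^E)}$ and using the Nash property $V(\pi^*,\rho^E)\le V(\pi^E,\rho^E)$, I obtain
\begin{equation*}
\Epsilon(\pi^A) \le \bigl[V(\pi^*,\rho^A)-V(\pi^*,\rho^E)\bigr] + \bigl[V(\pi^E,\rho^E)-V(\pi^A,\rho^E)\bigr] + \bigl[V(\pi^A,\rho^E)-V(\pi^A,\rho^A)\bigr].
\end{equation*}
The middle term evaluates the two policies in the MDP obtained by freezing the population at $\rho^E$: there, the reward is effectively population-independent and bounded by $r_{\max}$, and $\pi^E$ is optimal, so the classical single-agent BC argument (recovered as the $L_r=0$ case of Theorem~\ref{thm:bound-bc}) yields $H^2 r_{\max}\epsilon^{\mathrm{BC}}$.

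The two outer terms are of the form ``same policy, different population,'' so the task reduces to a Lipschitz-in-population value estimate: for any policy $\pi$,
\begin{equation*}
|V(\pi,\rho^A)-V(\pi,\rho^E)| \le L_r \sum_{n=0}^{H-1} d_n + r_{\max} \sum_{n=0}^{H-1} f_n,
\end{equation*}
with $d_n:=\|\rho^A_n-\rho^E_n\|_1$ and $f_n:=\|\rho_n^{(\pi^A)\pi}-\rho_n^{(\pi^E)\pi}\|_1$. This is obtained by writing $V(\pi,\rho)=\sum_{n,s,a}\mu_n^{(\rho)\pi}(s,a)\,r(s,a,\rho_n)$ and adding/subtracting $\mu_n^{(\pi^A)\pi}(s,a)\,r(s,a,\rho_n^E)$, together with the identity $\|\mu_n^{(\pi^A)\pi}-\mu_n^{(\pi^E)\pi}\|_1=f_n$ that holds because the individual policy $\pi$ is shared across the two processes.

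The crux, and the origin of the exponential factor, is the two recursions. Splitting $\rho^A_{n+1}-\rho^E_{n+1}$ into an occupancy-change piece and a transition-shift piece gives $d_{n+1}\le \|\mu^A_n-\mu^E_n\|_1 + L_P d_n$, and a straightforward add/subtract shows $\|\mu^A_n-\mu^E_n\|_1 \le d_n + \epsilon^{\mathrm{BC}}$, so
\begin{equation*}
d_{n+1} \le \epsilon^{\mathrm{BC}} + (1+L_P)d_n,\quad d_0=0 \quad\Longrightarrow\quad d_n\le \tfrac{(1+L_P)^n-1}{L_P}\,\epsilon^{\mathrm{BC}}.
\end{equation*}
An analogous split for $f_n$ yields $f_{n+1}\le f_n + L_P d_n$ (no $\epsilon^{\mathrm{BC}}$ contribution, since $\pi$ is fixed in both processes), which with the bound on $d_n$ gives the same bound on $f_n$. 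Summing over $n=0,\dots,H-1$ produces $\sum_n d_n,\sum_n f_n\le \tfrac{(1+L_P)^H}{L_P^2}\,\epsilon^{\mathrm{BC}}$, so each of the two Lipschitz terms contributes at most $(L_r+r_{\max})\tfrac{(1+L_P)^H}{L_P^2}\epsilon^{\mathrm{BC}}$; adding the $H^2 r_{\max}\epsilon^{\mathrm{BC}}$ from the middle term gives exactly the claim.

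The main obstacle is the $(1+L_P)$ amplification in the $d_n$ recursion, which is unavoidable when $P$ is Lipschitz in $\rho$: a per-step deviation of $\pi^A$ from $\pi^E$ drifts the population, and that drift in turn distorts the next-step transitions, producing geometric compounding rather than the polynomial accumulation seen in Theorem~\ref{thm:bound-bc}. This structural obstruction is what prevents a clean reduction to single-agent IL and motivates the proxies and MFC-based formulation discussed in the remainder of the paper.
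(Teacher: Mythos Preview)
Your proof is correct and follows essentially the same route as the paper: the same three-term telescope through $V(\pi^*,\rho^E)$, $V(\pi^E,\rho^E)$, $V(\pi^A,\rho^E)$ (the paper's Lemma~\ref{lem:value_diff}), the same ``same policy, different population'' estimate (the paper's Lemma~\ref{lem:poli-pop}), the same recursion $d_{n+1}\le(1+L_P)d_n+\epsilon^{\mathrm{BC}}$ for $\|\rho_n^A-\rho_n^E\|_1$ (Lemma~\ref{lem:bc-error-state-distr}) and $f_{n+1}\le f_n+L_Pd_n$ for the single-agent drift (Lemma~\ref{lem:state-distribution-same-policy}), and the same Lemma~\ref{lem:bc-error-flow} for the frozen-population middle term. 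The only cosmetic difference is that the paper packages these as separate auxiliary lemmas whereas you run the computation inline.
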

We observe that when the dynamics depend on the population, the dependence on the horizon is no longer quadratic but exponential. From a technical viewpoint, this comes from the dependency of the transition kernel to the population. A worse dependency makes sense intuitively. In the classic IL setting, an error on policies will amplify at the occupancy measure level (drift phenomenon in imitation learning). Here, this problem is even more amplified by the fact that the transition kernel that defines the occupancy measure itself depends on the related population, amplifying even more the imitation error. %
In Section~\ref{sec:simulation} we provide some empirical evidence that IL in MFGs is harder for BC as $L_P$ increases.
\looseness=-1

Although we cannot claim our result is tight, this suggests that in MFGs we cannot hope to use BC to obtain a good imitation policy $\pi^A$, since we need to control the divergence between the state-action occupancy induced by $\pi^A$ and the one induced by $\pi^E$. 

\textbf{Adversarial learning.}
In contrast to the population-independent case, the quantity to control in this setting is not trivial.
In fact, it depends on the MFG interaction assumption. We can assume to have the possibility to interact with only the MFG or with the MFG driven by $\rho^E$. These two kind of interactions lead us to consider two different errors. 
We start by considering the error we would like to minimize if we can interact with the MFG driven by the expert population $\rho^E$. Assuming access to this MFG, the IL problem reduces to doing classic IL in an MDP without reward and with transition distribution $P(\cdot|s,a,\rho^{\pi^E})$. The error is defined as follows:
\begin{equation}
\epsilon_n^{\mathrm{vanilla-ADV}} := \|\mu_n^{(\pi^E)\pi^A}-\mu_n^{(\pi^E)\pi^E}\|_1, \quad n\in\{0,\dots,H-1\}.
\end{equation}
This assumption was implicitly made in all previous works  (see Sec.~\ref{sec:related}), where the authors assume to fix the expert distribution $\rho^E$ and then solve the IL problem. 
However, assuming to have access to the expert distribution may not be 
reasonable in practice, and due to this in \cite{chen2021adversarial, chen2022individual} the authors replace the expert population distribution $\rho^E$ with its approximation from sampling $\hat{\rho}^E$, in order to learn $\mu^{(\pi^E)\pi^A}$. 
In reality, however, we seek to learn $\mu^{(\pi^E)\pi^A}$ and then $\rho^E$, which is the ultimate goal of the problem. 
This circular reasoning leads to not easily having theoretical guarantees for this setting. 

In this case, for obtaining an adversarial formulation, we can use a similar approach as the one for the non-population dependent dynamics, Eq.~\eqref{eq:IPM-classic}, providing (details in Appx.~\ref{sec:ipm}):
\begin{equation}
    \min_\pi \sum_{n=0}^{H-1}\|\mu_n^{(\pi^E)\pi^{E}} - \mu_n^{(\pi^E)\pi}\|_1 = \max_{f\in\mathcal{R}} \min_\pi (V_f(\pi^E, \rho^{(\pi^E)}) - V_f(\pi, \rho^{(\pi^E)})).
\end{equation}
We can observe that the inner problem is again an RL problem (for the MDP induced by $\rho^{(\pi^E)}$), and in practice any single-agent adversarial approach could be applied to solve for a similar proxy (related to a different min-max problem).

We provide a bound for this approach (proof in Appx.~\ref{sec:proofs}).
\begin{restatable}{thm}{boundvanillagail}
\label{thm:bound-vanilla-ADV}
    Let $\epsilon^{\mathrm{vanilla-ADV}} = \max_{n\in\{0,\dots,H\}} \epsilon_n^{\mathrm{vanilla-ADV}}$.
    If $L_P>0$  the NE imitation gap satisfies:
    \begin{equation}
        \Epsilon(\pi^{A})\leq\left(r_{\max}H+2(1+L_{P})^{H}\frac{r_{\text{max}}+L_{r}}{L_{P}}\right)\epsilon^{\mathrm{vanilla-ADV}}.
    \end{equation}
\end{restatable}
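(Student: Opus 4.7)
The plan is to introduce a best-response policy $\pi^{\star}\in\argmax_{\pi'} V(\pi',\rho^{(\pi^A)})$, so that $\Epsilon(\pi^A)=V(\pi^{\star},\rho^{(\pi^A)})-V(\pi^A,\rho^{(\pi^A)})$. Adding and subtracting $V(\pi^{\star},\rho^{(\pi^E)})$ and $V(\pi^E,\rho^{(\pi^E)})$ and using that $\pi^E$ is a Nash equilibrium, i.e.\ $V(\pi^{\star},\rho^{(\pi^E)})\leq V(\pi^E,\rho^{(\pi^E)})$, the gap is bounded by
\begin{equation*}
\Epsilon(\pi^A) \leq \bigl[V(\pi^{\star},\rho^{(\pi^A)})-V(\pi^{\star},\rho^{(\pi^E)})\bigr] + \bigl[V(\pi^E,\rho^{(\pi^E)})-V(\pi^A,\rho^{(\pi^A)})\bigr].
\end{equation*}
I would further split the second bracket by inserting $V(\pi^A,\rho^{(\pi^E)})$. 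The piece $V(\pi^E,\rho^{(\pi^E)})-V(\pi^A,\rho^{(\pi^E)})$ is evaluated at the \emph{fixed} population $\rho^{(\pi^E)}$ and rewrites as $\sum_n\sum_{s,a}\bigl(\mu_n^{(\pi^E)\pi^E}-\mu_n^{(\pi^E)\pi^A}\bigr)(s,a)\,r(s,a,\rho_n^{(\pi^E)})$; under Assumption~\ref{asm:reward} this is bounded by $r_{\max}\sum_n\epsilon_n^{\mathrm{vanilla-ADV}}\leq r_{\max} H\,\epsilon^{\mathrm{vanilla-ADV}}$, giving the linear-in-$H$ contribution of the claim.

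The two remaining pieces, $V(\pi^{\star},\rho^{(\pi^A)})-V(\pi^{\star},\rho^{(\pi^E)})$ and $V(\pi^A,\rho^{(\pi^E)})-V(\pi^A,\rho^{(\pi^A)})$, share the structure of a \emph{fixed} policy evaluated against two different populations, and together will produce the exponential term. To bound them I would first establish a drift recursion on $\delta_n:=\|\rho_n^{(\pi^A)}-\rho_n^{(\pi^E)}\|_1$. Expanding the population update and using the $L_P$-Lipschitzness of $P$ gives $\delta_{n+1}\leq L_P\,\delta_n+\|\mu_n^{(\pi^A)\pi^A}-\mu_n^{(\pi^E)\pi^E}\|_1$, and the last $\ell_1$-distance is controlled by $\delta_n+2\,\epsilon_n^{\mathrm{vanilla-ADV}}$ after inserting the intermediate measure $\mu_n^{(\pi^E)\pi^A}$ and applying the triangle inequality twice. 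Unrolling $\delta_{n+1}\leq(1+L_P)\delta_n+2\,\epsilon^{\mathrm{vanilla-ADV}}$ from $\delta_0=0$ yields $\delta_n\leq 2\,\epsilon^{\mathrm{vanilla-ADV}}\,[(1+L_P)^n-1]/L_P$, which is the origin of the $(1+L_P)^H/L_P$ factor.

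For each same-policy, differing-population piece, a backward induction on the value function (equivalently a mean-field performance-difference identity) represents the difference as a combination of the $\delta_n$'s weighted by $L_r$ (from the reward Lipschitzness) and by $L_P\,\|V^\pi(\cdot;\rho^{(\pi^E)})\|_\infty$ (from the transition Lipschitzness). Assumption~\ref{asm:reward} controls the latter value function uniformly by $H r_{\max}$ since the population is $\rho^{(\pi^E)}$, so the effective coefficient scales as $L_r+r_{\max}$ and, combined with the drift bound above, it produces the $2(1+L_P)^H(L_r+r_{\max})/L_P$ exponential term. I expect the main technical obstacle to be precisely this last combination: a naive forward summation of the $\delta_n$'s produces a suboptimal $1/L_P^{2}$ denominator, and recovering the stated $1/L_P$ requires arranging the backward induction so that the transition-kernel $L_P$ is absorbed into the geometric denominator rather than appearing as an extra multiplicative factor in the numerator.
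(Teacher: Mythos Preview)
Your decomposition is exactly the one the paper uses (its Lemma~\ref{lem:value_diff}): drop $V(\pi^\star,\rho^{(\pi^E)})-V(\pi^E,\rho^{(\pi^E)})\le 0$ by the Nash property, bound $V(\pi^E,\rho^{(\pi^E)})-V(\pi^A,\rho^{(\pi^E)})$ by $r_{\max}\sum_n\epsilon_n^{\mathrm{vanilla-ADV}}$, and treat the two remaining ``same policy, two populations'' pieces. The gap is in how you control $\delta_n=\|\rho_n^{(\pi^A)}-\rho_n^{(\pi^E)}\|_1$.

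Your recursion $\delta_{n+1}\le(1+L_P)\delta_n+2\epsilon$ is correct but too loose: it yields $\delta_n\le 2\epsilon[(1+L_P)^n-1]/L_P$, which is off from the paper's $\delta_n\le\epsilon(1+L_P)^n$ by a factor $\sim 2/L_P$, and this extra $1/L_P$ propagates to the final bound. The backward value induction you propose does not repair this. The simulation-lemma identity gives weights $L_r+L_P\|V_{n+1}^{\pi}(\cdot;\rho^{(\pi^E)})\|_\infty$ on $\delta_n$; whether you bound $\|V_{n+1}\|_\infty$ uniformly by $Hr_{\max}$ (picking up an extra $H$) or by $(H-n-1)r_{\max}$ and sum, you end up with $(1+L_P)^H/L_P^2$ (or $H(1+L_P)^H/L_P$), not the claimed $(1+L_P)^H/L_P$. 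The $L_P$ in the transition weight cancels one $1/L_P$ from your $\delta_n$, but the geometric sum over $n$ reintroduces another.

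The paper's fix is not a backward induction but a sharper \emph{forward} recursion: instead of iterating on $\delta_n$ directly, it tracks $g_n:=\|\rho_n^{(\pi^E)\pi^A}-\rho_n^{(\pi^A)\pi^A}\|_1$ (same policy $\pi^A$, two populations), for which the propagation lemma gives $g_{n+1}\le L_P\delta_n+g_n$, together with the pointwise relation $\delta_n\le\epsilon+g_n$. Substituting yields $g_{n+1}\le L_P\epsilon+(1+L_P)g_n$, whose additive term is $L_P\epsilon$ rather than $2\epsilon$; unrolling gives $g_n\le\epsilon[(1+L_P)^n-1]$ and hence $\delta_n\le\epsilon(1+L_P)^n$. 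The same-policy pieces $\sum_n\|\rho_n^{(\pi^A)\pi}-\rho_n^{(\pi^E)\pi}\|_1$ are then bounded by the same $L_P\delta_n$-recursion (not by a value-function argument), and summing the geometric series once gives the stated $1/L_P$. So the missing idea is to isolate the same-policy drift $g_n$ so that the vanilla-ADV error enters the recursion only after being multiplied by $L_P$.
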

Therefore, in this case, if in practice we can apply GAIL-like approaches to the MDP with transition distribution $P(\cdot|s,a,\rho^E)$, we have an exponential dependency in the horizon.
Interestingly this is close to the quantity that previous works \cite{chen2021adversarial,chen2022individual} tried to control. 
Although we do not know if this bound is tight, reasonably controlling this quantity can lead to weak theoretical guarantees.
Section~\ref{sec:simulation} provides empirical evidences that, when $H$ and $L_P$ are large enough, vanilla-ADV behaves much like BC.
\looseness=-1

\subsection{Population-dependent dynamics: a new efficient proxy}
\label{sec:pop-dependent-sol}
Although the vanilla-ADV error is a reasonable quantity to control, another interesting proxy consists in considering the occupancy measures induced when the population is the considered policy (and not always the expert one):
\begin{equation}
\epsilon_n^{\mathrm{MFC-ADV}} := \|\mu_n^{(\pi^A)\pi^A}-\mu_n^{(\pi^E)\pi^E}\|_1, \quad n\in\{0,\dots,H-1\}.
\end{equation}
To control this quantity, we do not need to have access to the expert population (except through data), we can directly interact with the MFG, driven by what we are learning. 
As we explained in Sec.~\ref{sec:pop-independent}, if $L_P=0$ then this quantity is the same as the vanilla-ADV error.
Before presenting and discussing an adversarial formulation (that will explain the naming choice for the error), we provide a novel bound for this quantity (proof in Appx.~\ref{sec:proofs}):
\begin{restatable}{thm}{mfcgailbound}
Let
$\epsilon^{\mathrm{MFC-ADV}} = \max_{n\in\{0,\dots,H-1\}}\epsilon^{\mathrm{MFC-ADV}}_n$.
If $L_P>0$, the Nash imitation gap satisfies:
\begin{equation}
\Epsilon(\pi^{A})\le\left((2L_{r}+r_{\max})H+3L_{P}r_{\max}H^{2}\right)\epsilon^{\mathrm{MFC-ADV}}.
\end{equation}
\end{restatable}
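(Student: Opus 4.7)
The plan is to decompose the Nash imitation gap using the Nash equilibrium property of $\pi^E$ and then control each resulting term via the Lipschitz assumption and a linear (rather than compounding) drift estimate for occupancy measures.

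Let $\pi^\star \in \argmax_{\pi'} V(\pi', \rho^{(\pi^A)})$. Introducing $\pm V(\pi^\star, \rho^{(\pi^E)}) \pm V(\pi^E, \rho^{(\pi^E)})$, I would write
\begin{align*}
\Epsilon(\pi^A)
&= \underbrace{V(\pi^\star, \rho^{(\pi^A)}) - V(\pi^\star, \rho^{(\pi^E)})}_{(\mathrm{I})}
+ \underbrace{V(\pi^\star, \rho^{(\pi^E)}) - V(\pi^E, \rho^{(\pi^E)})}_{(\mathrm{II}) \,\le\, 0}
+ \underbrace{V(\pi^E, \rho^{(\pi^E)}) - V(\pi^A, \rho^{(\pi^A)})}_{(\mathrm{III})},
\end{align*}
where $(\mathrm{II}) \le 0$ follows immediately from $\Epsilon(\pi^E) = 0$, so $\Epsilon(\pi^A) \le (\mathrm{I}) + (\mathrm{III})$.

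For $(\mathrm{III})$, I would use $V(\pi, \rho^{(\pi)}) = \sum_n \sum_{s,a} \mu_n^{(\pi)}(s,a)\, r(s,a,\rho_n^{(\pi)})$, split the difference into an "occupancy" part (same reward $r(\cdot,\cdot,\rho_n^E)$, bounded by Assumption~\ref{asm:reward}) and a "reward" part (Lipschitzness in the population via $L_r$), and bound the resulting quantity by $(r_{\max} + L_r) H \, \epsilon^{\mathrm{MFC\text{-}ADV}}$, using $\|\rho_n^A - \rho_n^E\|_1 \le \|\mu_n^A - \mu_n^E\|_1$ and $\sum_{s,a}\mu_n^A(s,a) = 1$.

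For $(\mathrm{I})$, the reward split gives the same $L_r H \, \epsilon^{\mathrm{MFC\text{-}ADV}}$ contribution. The occupancy part requires bounding $\sum_n \|\mu_n^{(\pi^A)\pi^\star} - \mu_n^{(\pi^E)\pi^\star}\|_1$, and this is where the crucial structural fact of the proof lies. Because both rollouts use the \emph{same} policy $\pi^\star$ and the driving populations $\rho^A, \rho^E$ are \emph{fixed} (not themselves determined by $\pi^\star$), the standard one-step analysis yields
\begin{equation*}
\|\rho_{n+1}^{(\pi^A)\pi^\star} - \rho_{n+1}^{(\pi^E)\pi^\star}\|_1
\;\le\; \|\rho_{n}^{(\pi^A)\pi^\star} - \rho_{n}^{(\pi^E)\pi^\star}\|_1 + L_P\,\|\rho_n^A - \rho_n^E\|_1.
\end{equation*}
The second summand is bounded by $L_P\,\epsilon^{\mathrm{MFC\text{-}ADV}}$ uniformly, so the inhomogeneous term does \emph{not} feed back into itself (contrast with the BC analysis, where everyone using $\pi^A$ makes $\rho^A$ itself the iterate, producing the $(1+L_P)^H$ factor). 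Iterating gives a linear-in-$n$ bound, hence $\sum_n\|\mu_n^{(\pi^A)\pi^\star} - \mu_n^{(\pi^E)\pi^\star}\|_1 = O(L_P H^2)\,\epsilon^{\mathrm{MFC\text{-}ADV}}$, which multiplied by $r_{\max}$ yields the $L_P r_{\max} H^2$ term.

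The main obstacle (and the key insight the theorem exploits) is precisely step three: recognizing that this specific proxy decouples the population-drift recursion from itself, avoiding the Grönwall-type exponential amplification that plagues the BC and vanilla-ADV bounds. Summing $(\mathrm{I})$ and $(\mathrm{III})$ then produces exactly a bound of the form $((2L_r + r_{\max})H + c \, L_P r_{\max} H^2)\,\epsilon^{\mathrm{MFC\text{-}ADV}}$, and only routine constant bookkeeping distinguishes the final coefficient from the stated $3$.
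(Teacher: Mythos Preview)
Your proposal is correct and follows essentially the same approach as the paper: the same high-level decomposition via the Nash property of $\pi^E$ (the paper's terms $A_1$, $A_2$, $B$ are your $(\mathrm{I})$, $(\mathrm{II})$, $(\mathrm{III})$), and the same key one-step recursion $\|\rho_{n+1}^{(\pi^A)\pi} - \rho_{n+1}^{(\pi^E)\pi}\|_1 \le \|\rho_{n}^{(\pi^A)\pi} - \rho_{n}^{(\pi^E)\pi}\|_1 + L_P\|\rho_n^A - \rho_n^E\|_1$ combined with the uniform bound $\|\rho_n^A - \rho_n^E\|_1 \le \epsilon^{\mathrm{MFC\text{-}ADV}}$ to avoid the exponential blow-up.

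The one genuine (minor) difference is in how you treat $(\mathrm{III})$. The paper inserts the intermediate point $V(\pi^A,\rho^E)$, splitting $(\mathrm{III})$ into $B_1 = V(\pi^E,\rho^E) - V(\pi^A,\rho^E)$ and $B_2 = V(\pi^A,\rho^E) - V(\pi^A,\rho^A)$; bounding $B_1$ forces it through $\sum_n\|\mu_n^{(\pi^E)\pi^E} - \mu_n^{(\pi^E)\pi^A}\|_1$, which in turn is controlled only after a further triangle inequality via $\mu_n^{(\pi^A)\pi^A}$, and $B_2$ contributes yet another $\|\rho_n^{(\pi^A)\pi^A} - \rho_n^{(\pi^E)\pi^A}\|_1$ term. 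This is what produces three copies of the $L_P r_{\max} H^2$ contribution. Your direct split of $(\mathrm{III})$ into occupancy and reward parts against $r(\cdot,\cdot,\rho_n^E)$ avoids that detour and yields only a single $L_P r_{\max} H^2$ term (from $(\mathrm{I})$), i.e.\ $c=1$ rather than $c=3$. So your ``routine bookkeeping'' remark undersells it slightly: your route is not just equivalent but actually sharper in the constant, while the paper's route has the advantage of reusing a single general decomposition lemma (their Lemma~\ref{lem:value_diff}) across all four theorems.
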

This gives us a significant improvement compared to the BC case and the vanilla-ADV error. Indeed, the dependency to the horizon is now quadratic and not anymore exponential. Comparing this result with classic imitation learning we have worse dependency on $H$, since in the classic setting is only linear (we can recover this dependency when $L_P = 0$). However, as explained before, the fact that the transition kernel does depend on the population indeed intuitively implies a larger amplification of errors. Section~\ref{sec:simulation} provides empirical evidences that this approach is better than the two previous ones, with much less influence of $H$ and $L_P$.

Now, we provide an adversarial formulation, to get a sense of what a practical algorithm could look like. Using again an IPM argument, we have that (details in Appx.~\ref{sec:ipm})
\begin{equation}
    \min_\pi \sum_{n=0}^{H-1}\|\mu_n^{(\pi^E)\pi^{E}} - \mu_n^{(\pi)\pi}\|_1 = \min_\pi \max_{f\in\mathcal{R}}  (V_f(\pi^E, \rho^{(\pi^E)}) - V_f(\pi, \rho^{(\pi)})).
\end{equation}
Notice that it is not obvious if we can switch the min and the max here, due to the underlying  set of policy-induced occupancy measures being not necessarily convex (due to the dependency of the dynamics on the population).  Assuming we can,
we still learn an intermediate population-independent non-stationary reward, but now the underlying control problem is no longer RL, it is an MFC problem, as it implies studying $\max_\pi V_f(\pi, \rho^{(\pi)})$, where the population does depend on the optimized policy. This suggests that in practice one could start from a classic adversarial IL approach, and replace the underlying RL algorithm by an inner MFC algorithm (e.g. \cite{subramanian2019reinforcement,carmona2019model,pasztor2021efficient,gu2021mean,angiuli2022unified}).
We leave the design and implementation of such a practical algorithm for future work, which may be more complex than a straight replacement of the control part (notably, many MFC approaches learn a population-dependent policy). However, this overall suggests that making progress on IL for MFGs may have to build upon MFC.
\looseness=-1

\section{Numerical simulations}
\label{sec:simulation}

\begin{figure}[h]
	\centering
	\includegraphics[width=10cm]{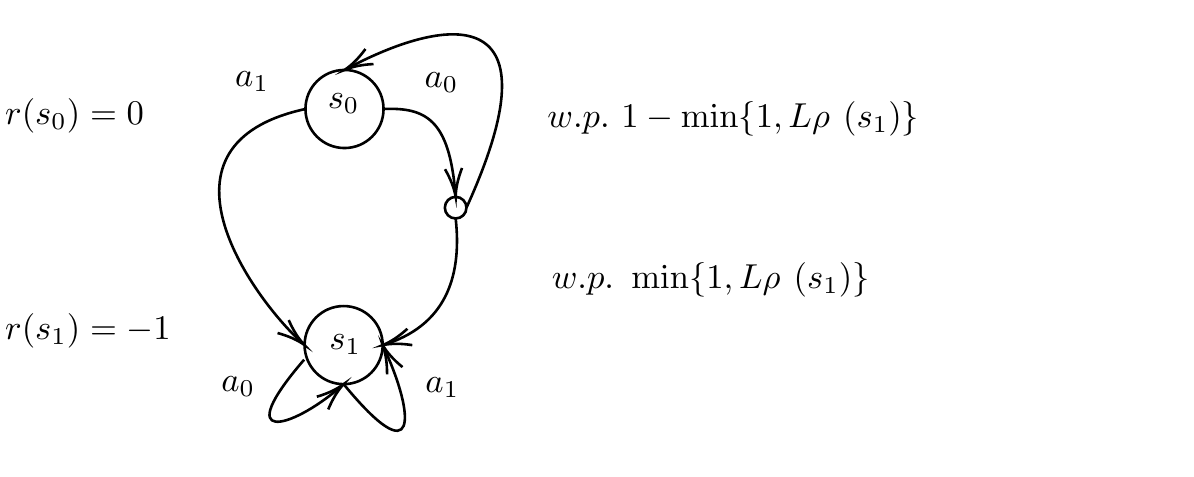}
	\caption{The ``attractor'' mean-field game.}
    \label{fig:attractor-mfg}
\end{figure}

In order to provide some empirical evidences of the insights given by our analysis (influence of the horizon and the dependency of the dynamics to the population on the various considered proxies to the Nash imitation gap), we introduce the ``Attractor MFG'' depicted in Fig.~\ref{fig:attractor-mfg}.

This is a 2-state and 2-action MFG with initial distribution  satisfying $\rho_0(s_0)=1$, with horizon $H$ and with Lipschitz parameter $L$. The reward only depends on the state (not on the distribution nor the action) and satisfies for all $a\in\actions$, $\rho\in\Delta_\states$,
\begin{equation}
    r(s_0,a,\rho) = 0 \text{ and } r(s_1, a, \rho) = -1.
\end{equation}
In the state $s_1$, any choice of actions leads deterministically to $s_1$, the transition kernel satisfies for all $a\in\actions$, $\rho\in\Delta_\states$,
\begin{equation}
    P(s_1|s_1, a, \rho) = 1. 
\end{equation}
In the state $s_0$, the action $a_0$ leads deterministically  to $s_1$, while action $a_0$ leads stochastically to one of the two states: the higher the fraction of the population in $s_1$, the higher the chance to transit to $s_1$ after choosing $a_0$:
\begin{equation}
    P(s_1|s_0,a_1,\rho) = 1 \text{ and } P(s_1|s_0,a_0,\rho) = \min\{1, L \rho(s_1)\}.
\end{equation}
Therefore, the state $s_1$ is an attractor, hence the chosen name for the MFG.

Any policy choosing action $a_0$ in state $s_0$ for every timestep is a Nash equilibrium. Denoting by $\pi^E$ such a policy, its value is $V(\pi^E, \rho^{E})= 0$ (it is also a social equilibrium). 
The associated population obviously satisfies $\rho^{E}_n(s_0)=1$ for all time steps $n\in\{0,\dots,H\}$. 
We can also bound the Nash imitation gap, as any policy choosing action $a_1$ at timestep $0$ in $s_0$ (and any action afterwards) will lead to the lowest possible value, that is for any policy $\pi$,
\begin{equation}
    \Epsilon(\pi) \leq H-1.
\end{equation}

The Nash equilibrium being stationary, we consider the policy $\pi_\alpha$ being parameterized by the single scalar parameter $\alpha\in[0,1]$, defined as (recall that the action selection on $s_1$ has no influence):
\begin{equation}
    \pi^\alpha(a_1|s_0) = \alpha.
\end{equation}
So, $\pi^{\alpha=0}$ is a Nash equilibrium, and $\pi^{\alpha=1}$ is a worst-case policy (of value $-(H-1)$). For such a policy, we directly get the BC error as
\begin{equation}
    \epsilon_n^\mathrm{BC}(\pi^\alpha) = \E_{s\sim\rho^{E}}[\|\pi^\alpha_n(\cdot|s) - \pi^E_n(\cdot|s)\|_1] = \|\pi^\alpha_n(\cdot|s_0) - \pi^E_n(\cdot|s_0)\|_1 = 2 \alpha. 
\end{equation}
We can also easily compute the occupancy measures of interest by induction (it is sufficient to do so in the state $s_1$, as there are only two states):
\begin{align}
    \rho_0^{(E)\pi^\alpha}(s_1) &= 0, \quad \rho_{n+1}^{(E)\pi^\alpha}(s_1) = \rho_{n}^{(E)\pi^\alpha}(s_1) + (1-\rho_{n}^{(E)\pi^\alpha}(s_1))\alpha,
    \\
    \rho_0^{(\pi^\alpha)}(s_1) &= 0, \quad 
    \rho_{n+1}^{(\pi^\alpha)}(s_1) = \rho_n^{(\pi^\alpha)}(s_1) + (1-\rho_n^{(\pi^\alpha)}(s_1))(\alpha + (1-\alpha)\min\{1, L \rho_n^{(\pi^\alpha)}(s_1)\}).
\end{align}
From this, we can easily get the related errors,
\begin{align}
    \epsilon_n^\mathrm{vanilla-ADV}(\pi^\alpha) &= \| \mu_n^{(E)} - \mu_n^{(E)\pi^\alpha}\|_1 = 2(\alpha + \rho_{n}^{(E)\pi^\alpha}(s_1)(1-\alpha) ),
    \\
    \epsilon_n^\mathrm{MFC-ADV}(\pi^\alpha) &= \| \mu_n^{(E)} - \mu_n^{(\pi^\alpha)}\|_1 = 2(\alpha + \rho_{n}^{(\pi^\alpha)}(s_1)(1-\alpha) ).
\end{align}
With the above quantity, we also directly have the Nash imitation gap,
\begin{equation}
    \Epsilon(\pi^\alpha) = \sum_{n=0}^{H-1} \rho_{n}^{(\pi^\alpha)}(s_1).
\end{equation}
From this, we can also compute the maximum errors $\epsilon^\mathrm{BC}(\pi_\alpha)$, $\epsilon^\mathrm{vanilla-ADV}(\pi^\alpha)$ and $\epsilon^\mathrm{MFC-ADV}(\pi^\alpha)$.

The numerical illustration we propose consists in computing these quantities for a grid of values of $\alpha$, for various values of $L \in\{0.01, 0.1, 0.5, 1, 2\}$ and of $H\in\{3, 25, 50, 75, 100\}$, and showing the NIG as a function of respectively $\epsilon^\mathrm{BC}$, $\epsilon^\mathrm{vanilla-ADV}$ and $\epsilon^\mathrm{MFC-ADV}$, for the considered parameterized policies. The results are provided in Fig.~\ref{fig:results}. 

\begin{figure}
	\centering
    \includegraphics[width=1\linewidth]{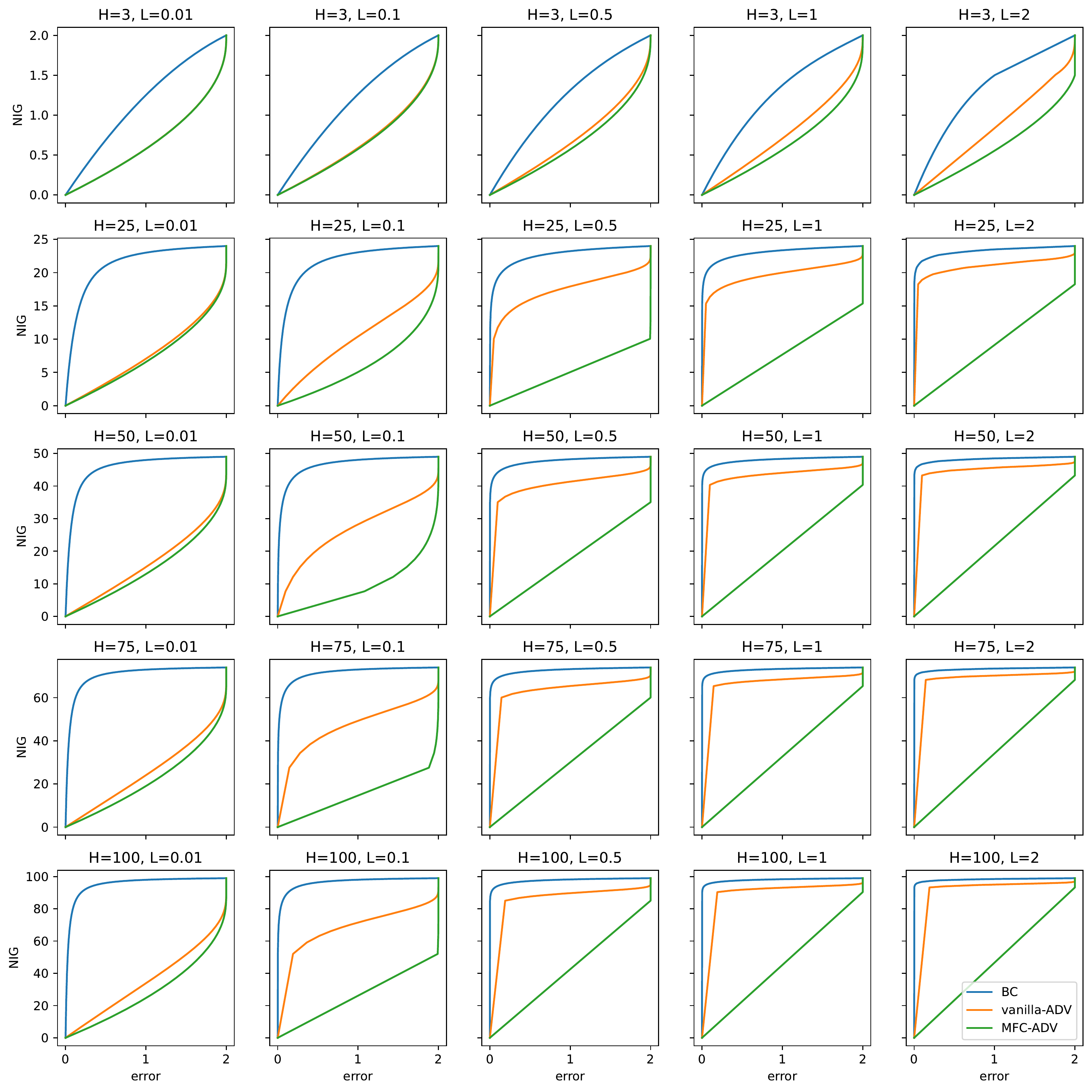}
    \caption{Nash imitation gap as a function of the considered maximum errors ($\epsilon^\mathrm{BC}$, $\epsilon^\mathrm{vanilla-ADV}$ and $\epsilon^\mathrm{MFC-ADV}$) in the attractor MFG, for various values of $L$ (column-wise) and $H$ (row-wise).}
    \label{fig:results}
\end{figure}

We observe the following:
\begin{itemize}
    \item NIG has a worse dependency to the BC errors than to the other ones. When either $L$ or $H$ increases, this dependency worsens, in the sense that smaller values of $\epsilon^\mathrm{BC}$ are required for ensuring a given NIG.
    \item Vanilla-ADV and MFC-ADV behave similarly for small $L$ and $H$ (recall also that they are the same quantity for $L=0$). However, whenever $L$ or $H$ increases, the dependency of the NIG to $\epsilon^\mathrm{vanilla-ADV}$ worsens. Indeed, whenever $L$ and/or $H$ are large enough, one can observe than vanilla-ADV behaves more like BC than like MFC-ADV.
    \item MFC-ADV is also influenced by the values of $L$ and $H$, but much less, and is always the best approach.
\end{itemize}
Overall, this supports the insights from our analysis. In particular MFC-ADV is better than vanilla-ADV, which itself is better than BC. When $L$ and $H$ are small enough, vanilla-ADV and MFC-ADV behave similarly, and when $L$ and/or $H$ are large enough, vanilla-ADV behaves more closely to BC. 
Overall, this suggests that a practical IL approach for MFGs should take into account the fact that the dynamics does depend on the population, and this may build upon MFC as suggested in the main paper.
\looseness=-1

\section{Conclusion}

In this paper, we have studied the recent question of imitation learning in mean-field games. We have reviewed the few previous works tackling this problem and provided a critical discussion on the proposed approach. We then have introduced the new solution concept of Nash imitation gap to quantify the quality of imitation. In the simpler case of a population-independent dynamics, we have shown that the problem basically reduces to single-agent imitation learning, and that abstractions of the canonical BC and adversarial approaches come with a similar performance guarantee. In the harder population-dependent case, we have provided upper-bounds that are \emph{exponential} in the horizon, for both BC and adversarial IL, the latter being the approach adopted in previous work. We also introduce a new proxy that amounts to control different occupancy measures, not solely driven by the expert population but by the policy-induced population. From a practical viewpoint, it only implies interacting with 
the MFG (without access to the expert population for driving the dynamics), and from a theoretical viewpoint it enjoys a much better quadratic dependency to the horizon. 
The associated adversarial formulation suggests that classic adversarial IL approaches could be adapted by replacing the inner RL loop by an MFC one.

This works open a number of interesting research questions. Our last result suggests a simple modification of existing adversarial approaches, but in practice it may be more difficult than just replacing the control part, and could call for additional research in MFC and MFGs. 
Maybe also that controlling other kinds of occupancy measures, or even different quantities related to the policy and dynamics, may lead to even better guarantees, ideally recovering the linear dependency to the horizon of the single-agent case. 
We provided upper-bounds, but there is no known associated lower-bound in this setting, which is another interesting research direction. 
We focused on: i) experts achieving a Nash equilibrium, extending our work to other types of equilibria is another possible direction; and ii) imitating the expert, not on recovering the underlying reward function (inverse RL). 
Doing so may require a different approach, as our adversarial formulations all rely on population-independent rewards (which is convenient from a practical viewpoint, whenever one just wants to recover the policy). 
We also plan to work on the extension of Nash imitation gap to other kinds of games.

\section*{Disclosure of Funding}
Giorgia Ramponi is partially funded by Google Brain and by the ETH AI center.

\newpage

\bibliographystyle{plain}
\bibliography{biblio.bib}

\begin{thebibliography}{10}

\bibitem{achdou2019mean}
Yves Achdou and Jean-Michel Lasry.
\newblock Mean field games for modeling crowd motion.
\newblock {\em Contributions to partial differential equations and
  applications}, pages 17--42, 2019.

\bibitem{anahtarci2023q}
Berkay Anahtarci, Can~Deha Kariksiz, and Naci Saldi.
\newblock Q-learning in regularized mean-field games.
\newblock {\em Dynamic Games and Applications}, 13(1):89--117, 2023.

\bibitem{angiuli2022unified}
Andrea Angiuli, Jean-Pierre Fouque, and Mathieu Lauri{\`e}re.
\newblock Unified reinforcement {Q}-learning for mean field game and control
  problems.
\newblock {\em Mathematics of Control, Signals, and Systems}, 34(2):217--271,
  2022.

\bibitem{cardaliaguet2015mean}
Pierre Cardaliaguet and P~Jameson Graber.
\newblock Mean field games systems of first order.
\newblock {\em ESAIM: Control, Optimisation and Calculus of Variations},
  21(3):690--722, 2015.

\bibitem{carmona2020applications}
Rene Carmona.
\newblock Applications of mean field games in financial engineering and
  economic theory.
\newblock {\em arXiv preprint arXiv:2012.05237}, 2020.

\bibitem{carmona2021deep}
Ren{\'e} Carmona and Mathieu Lauri{\`e}re.
\newblock Deep learning for mean field games and mean field control with
  applications to finance.
\newblock {\em arXiv preprint arXiv:2107.04568}, 2021.

\bibitem{carmona2019model}
Ren{\'e} Carmona, Mathieu Lauri{\`e}re, and Zongjun Tan.
\newblock Model-free mean-field reinforcement learning: mean-field {MDP} and
  mean-field {Q}-learning.
\newblock {\em arXiv preprint arXiv:1910.12802}, 2019.

\bibitem{chen2022individual}
Yang Chen, Libo Zhang, Jiamou Liu, and Shuyue Hu.
\newblock Individual-level inverse reinforcement learning for mean field games.
\newblock In {\em Proceedings of the 21st International Conference on
  Autonomous Agents and Multiagent Systems}, pages 253--262, 2022.

\bibitem{chen2021adversarial}
Yang Chen, Libo Zhang, Jiamou Liu, and Michael Witbrock.
\newblock Adversarial inverse reinforcement learning for mean field games.
\newblock {\em arXiv preprint arXiv:2104.14654}, 2021.

\bibitem{cui2021approximately}
Kai Cui and Heinz Koeppl.
\newblock Approximately solving mean field games via entropy-regularized deep
  reinforcement learning.
\newblock In {\em International Conference on Artificial Intelligence and
  Statistics}, pages 1909--1917. PMLR, 2021.

\bibitem{dogbe2010modeling}
Christian Dogb{\'e}.
\newblock Modeling crowd dynamics by the mean-field limit approach.
\newblock {\em Mathematical and Computer Modelling}, 52(9-10):1506--1520, 2010.

\bibitem{elhenawy2015intersection}
Mohammed Elhenawy, Ahmed~A Elbery, Abdallah~A Hassan, and Hesham~A Rakha.
\newblock An intersection game-theory-based traffic control algorithm in a
  connected vehicle environment.
\newblock In {\em 2015 IEEE 18th international conference on intelligent
  transportation systems}, pages 343--347. IEEE, 2015.

\bibitem{garg2021iq}
Divyansh Garg, Shuvam Chakraborty, Chris Cundy, Jiaming Song, and Stefano
  Ermon.
\newblock Iq-learn: Inverse soft-q learning for imitation.
\newblock {\em Advances in Neural Information Processing Systems},
  34:4028--4039, 2021.

\bibitem{geist2022concave}
Matthieu Geist, Julien P{\'e}rolat, Mathieu Lauri{\`e}re, Romuald Elie, Sarah
  Perrin, Oliver Bachem, R{\'e}mi Munos, and Olivier Pietquin.
\newblock Concave utility reinforcement learning: The mean-field game
  viewpoint.
\newblock In {\em Proceedings of the 21st International Conference on
  Autonomous Agents and Multiagent Systems}, pages 489--497, 2022.

\bibitem{ghasemipour2020divergence}
Seyed Kamyar~Seyed Ghasemipour, Richard Zemel, and Shixiang Gu.
\newblock A divergence minimization perspective on imitation learning methods.
\newblock In {\em Conference on Robot Learning}, pages 1259--1277. PMLR, 2020.

\bibitem{gomes2010discrete}
Diogo~A Gomes, Joana Mohr, and Rafael~Rigao Souza.
\newblock Discrete time, finite state space mean field games.
\newblock {\em Journal de math{\'e}matiques pures et appliqu{\'e}es},
  93(3):308--328, 2010.

\bibitem{gu2021mean}
Haotian Gu, Xin Guo, Xiaoli Wei, and Renyuan Xu.
\newblock Mean-field controls with {Q}-learning for cooperative {MARL}:
  convergence and complexity analysis.
\newblock {\em SIAM Journal on Mathematics of Data Science}, 3(4):1168--1196,
  2021.

\bibitem{ho2016generative}
Jonathan Ho and Stefano Ermon.
\newblock Generative adversarial imitation learning.
\newblock In {\em Advances in neural information processing systems (NeurIPS)},
  2016.

\bibitem{ke2021imitation}
Liyiming Ke, Sanjiban Choudhury, Matt Barnes, Wen Sun, Gilwoo Lee, and
  Siddhartha Srinivasa.
\newblock Imitation learning as f-divergence minimization.
\newblock In {\em Algorithmic Foundations of Robotics XIV: Proceedings of the
  Fourteenth Workshop on the Algorithmic Foundations of Robotics 14}, pages
  313--329. Springer, 2021.

\bibitem{osa2018algorithmic}
Takayuki Osa, Joni Pajarinen, Gerhard Neumann, J~Andrew Bagnell, Pieter Abbeel,
  Jan Peters, et~al.
\newblock An algorithmic perspective on imitation learning.
\newblock {\em Foundations and Trends{\textregistered} in Robotics},
  7(1-2):1--179, 2018.

\bibitem{pasztor2021efficient}
Barna Pasztor, Ilija Bogunovic, and Andreas Krause.
\newblock Efficient model-based multi-agent mean-field reinforcement learning.
\newblock {\em arXiv preprint arXiv:2107.04050}, 2021.

\bibitem{perolat2022scaling}
Julien P{\'e}rolat, Sarah Perrin, Romuald Elie, Mathieu Lauri{\`e}re, Georgios
  Piliouras, Matthieu Geist, Karl Tuyls, and Olivier Pietquin.
\newblock Scaling mean field games by online mirror descent.
\newblock In {\em Proceedings of the 21st International Conference on
  Autonomous Agents and Multiagent Systems}, pages 1028--1037, 2022.

\bibitem{perrin2020fictitious}
Sarah Perrin, Julien P{\'e}rolat, Mathieu Lauri{\`e}re, Matthieu Geist, Romuald
  Elie, and Olivier Pietquin.
\newblock Fictitious play for mean field games: Continuous time analysis and
  applications.
\newblock {\em Advances in Neural Information Processing Systems},
  33:13199--13213, 2020.

\bibitem{pomerleau1991efficient}
Dean~A Pomerleau.
\newblock Efficient training of artificial neural networks for autonomous
  navigation.
\newblock {\em Neural computation}, 3(1):88--97, 1991.

\bibitem{ratliff2006maximum}
Nathan~D Ratliff, J~Andrew Bagnell, and Martin~A Zinkevich.
\newblock Maximum margin planning.
\newblock In {\em Proceedings of the 23rd international conference on Machine
  learning}, pages 729--736, 2006.

\bibitem{ross2010efficient}
St{\'e}phane Ross and Drew Bagnell.
\newblock Efficient reductions for imitation learning.
\newblock In {\em Proceedings of the thirteenth international conference on
  artificial intelligence and statistics}, pages 661--668. JMLR Workshop and
  Conference Proceedings, 2010.

\bibitem{ross2011reduction}
St{\'e}phane Ross, Geoffrey Gordon, and Drew Bagnell.
\newblock A reduction of imitation learning and structured prediction to
  no-regret online learning.
\newblock In {\em Proceedings of the fourteenth international conference on
  artificial intelligence and statistics}, pages 627--635. JMLR Workshop and
  Conference Proceedings, 2011.

\bibitem{subramanian2019reinforcement}
Jayakumar Subramanian and Aditya Mahajan.
\newblock Reinforcement learning in stationary mean-field games.
\newblock In {\em Proceedings of the 18th International Conference on
  Autonomous Agents and MultiAgent Systems}, pages 251--259, 2019.

\bibitem{swamy2021}
Gokul Swamy, Sanjiban Choudhury, J~Andrew Bagnell, and Steven Wu.
\newblock Of moments and matching: A game-theoretic framework for closing the
  imitation gap.
\newblock In {\em International Conference on Machine Learning}, pages
  10022--10032. PMLR, 2021.

\bibitem{tanaka2018linearly}
Takashi Tanaka, Ehsan Nekouei, and Karl~Henrik Johansson.
\newblock Linearly solvable mean-field road traffic games.
\newblock In {\em 2018 56th Annual Allerton Conference on Communication,
  Control, and Computing (Allerton)}, pages 283--289. IEEE, 2018.

\bibitem{wang2020research}
Huisheng Wang, Yuejiang Li, and H~Vicky Zhao.
\newblock Research on intelligent traffic control methods at intersections
  based on game theory.
\newblock {\em arXiv preprint arXiv:2009.05216}, 2020.

\bibitem{xu2020error}
Tian Xu, Ziniu Li, and Yang Yu.
\newblock Error bounds of imitating policies and environments.
\newblock {\em Advances in Neural Information Processing Systems},
  33:15737--15749, 2020.

\bibitem{yang2017learning}
Jiachen Yang, Xiaojing Ye, Rakshit Trivedi, Huan Xu, and Hongyuan Zha.
\newblock Learning deep mean field games for modeling large population
  behavior.
\newblock {\em arXiv preprint arXiv:1711.03156}, 2017.

\bibitem{yardim2022policy}
Batuhan Yardim, Semih Cayci, Matthieu Geist, and Niao He.
\newblock Policy mirror ascent for efficient and independent learning in mean
  field games.
\newblock {\em arXiv preprint arXiv:2212.14449}, 2022.

\bibitem{zhao2023imitation}
Zhiyu Zhao, Renyuan Xu, Haifeng Zhang, Jun Wang, Mingyuan Zhang, and Yaodong
  Yang.
\newblock Imitation learning for mean field games with correlated equilibria.
\newblock {\em openreview preprint: id=VUdMeSbExWg}, 2023.

\bibitem{ziebart2008maximum}
Brian~D Ziebart, Andrew Maas, J~Andrew Bagnell, and Anind~K Dey.
\newblock Maximum entropy inverse reinforcement learning.
\newblock In {\em Proceedings of the 23rd national conference on Artificial
  intelligence-Volume 3}, pages 1433--1438, 2008.

\end{thebibliography}


\newpage
\appendix{}
\addcontentsline{toc}{section}{Appendix} 
\part{Appendix} 
\parttoc 

\section{More details on IPMs}\label{sec:ipm}

In this section, we provide a detailed derivation and additional discussions of the adversarial viewpoint of the minimization of the $\ell_1$-distance between occupancy measures. Generally speaking, for a finite set $X$ and $\Delta_X$ the associated simplex, we can express the $\ell_1$-norm between probability distributions (their total variation) as an integral probability metric (IPM). For any $p,q\in\Delta_X$, we have
\begin{equation}
    \|p-q\|_1 = \sup_{f:X\rightarrow[-1,1]} (\E_{x\sim p}[f(x)] - \E_{x\sim q}[f(x)]).
\end{equation}
This will be the building block for framing an adversarial formulation of imitation learning, and making links to classic approaches such as GAIL, even if they consider usually a different framework (single agent with $\gamma$-discounted infinite horizon).

\subsection{Classic imitation setting}

First, we consider the classic IL setting presented in Sec.~\ref{sec:classic-IL}. Let $f:\states\times\actions\times[H-1] \rightarrow \mathbb{R}$ be a (non-stationary) reward, we recall that the value function can be written as
\begin{equation}
    V_f(\pi) = \sum_{s,a} \mu_n^\pi(s,a) f_n^\pi(s,a).
\end{equation}
We claimed that adversarial IL can be framed as minimizing for all time step $n$ the distance $\|\mu_n^E - \mu_n^A\|$. To see this, for a policy $\pi$ with associated sequence of occupancy measures $\mu_0^\pi,\dots,\mu_{H-1}^\pi$, define
\begin{equation}
    \tilde{\mu}^\pi = \frac{1}{H} \begin{pmatrix} \mu_0^\pi & \dots & \mu_{H-1}^\pi \end{pmatrix} \in \Delta_{\states\times\actions\times[H-1]}.
\end{equation}
Recall the set $\mathcal{R}$ defined in Sec.~\ref{sec:mfg}, $\mathcal{R} = \{\states\times\actions\times[H-1] \rightarrow [-1,1]\}$. Using the IPM viewpoint, we can write
\begin{align}
    \|\tilde{\mu}^E - \tilde{\mu}^\pi\|_1 &= \max_{f\in \mathcal{R}} (\E_{s,a\sim\tilde{\mu}_n^E}[f_n(s,a)]-\E_{s,a\sim\tilde{\mu}_n^\pi}[f_n(s,a)])
    \\
    &= \max_{f\in \mathcal{R}} \frac{1}{H} (\sum_{s,a} \mu_n^E(s,a) f_n(s,a) - \sum_{s,a} \mu_n^\pi(s,a) f_n(s,a))
    \\
    &= \max_{f\in \mathcal{R}} \frac{1}{H} (V_f(\pi^E) - V_f(\pi)).
\end{align}
Therefore, we can frame the imitation learning problem as finding a non-stationary policy $\pi^A\in\argmin_\pi \|\tilde{\mu}^E - \tilde{\mu}^\pi\|_1$, which amounts to solve
\begin{align}
    \min_\pi \sum_{n=0}^{H-1}\|\mu_n^E - \mu_n^\pi\|_1 &= \min_\pi H \|\tilde{\mu}^E - \tilde{\mu}^\pi\|_1 
    \\
    &= \min_\pi \max_{f\in\mathcal{R}} (V_f(\pi^E) - V_f(\pi))
    \\
    &= \max_{f\in\mathcal{R}} \min_\pi (V_f(\pi^E) - V_f(\pi)),
\end{align}
where the last equation holds because the saddle-point objective is linear in both $f$ and $\tilde{\mu}^\pi$. This is the result claimed in Sec.~\ref{sec:classic-IL}.

This is reminiscent of the classic adversarial approaches of the literature, with the inner problem consisting in solving a (non-stationary here) RL problem. There are important differences: the usual framework is $\gamma$-discounted infinite horizon, and as far as we know no practical approach is based on the IPM of the total variation. Rather, many of these adversarial approaches can be framed as minimizing an $f$-divergence between occupancy measures
~\cite{ghasemipour2020divergence,ke2021imitation}, for example GAIL minimize an entropy-regularized Jensen-Shannon divergence~\cite{ho2016generative}. We think that considering the total variation for our analysis and exposition is a practical and meaningful abstraction: it allows providing an analysis, and it could be an inspiration for deriving practical algorithms by applying a similar recipe.

\subsection{MFG adversarial imitation when $L_P=0$}

When $L_P=0$, the transition kernel of the MFG does not depend on the population. The reward does, but from the imitating agent viewpoint, there is an expert policy to imitate and an MDP without reward (the MFG without reward) to interact with. In other words, the minimization of the distance can be framed exactly as in the previous section:
\begin{equation}
    \min_\pi \sum_{n=0}^{H-1}\|\mu_n^E - \mu_n^\pi\|_1 = \max_{f\in\mathcal{R}} \min_\pi (V_f(\pi^E) - V_f(\pi)).
\end{equation}
So, no matter whether the expert policy is at a Nash equilibrium in an MFG or not, from the imitating agent this can be framed as a reduction to classical single agent imitation learning, with the exactly same guarantee.

\subsection{MFG adversarial imitation when $L_P>0$}

First, consider the adversarial imitation approach studied in Sec.~\ref{sec:pop-dependent}, that is we assume to control $\|\mu_n^{(\pi^E)\pi^{E}} - \mu_n^{(\pi^E)\pi^{A}}\|_1$. In essence, this means that we fix the population to be the expert one and ask a representative agent to imitate the expert policy. However, if we fix the population, the MFG without reward reduces to an MDP without reward, and we are again in the same case. We have
\begin{align}
    \|\tilde{\mu}^{(\pi^E)\pi^{E}} - \tilde{\mu}^{(\pi^E)\pi}\|_1 &= \max_{f\in \mathcal{R}} (\E_{s,a\sim\tilde{\mu}_n^{(\pi^E)\pi^{E}}}[f_n(s,a)]-\E_{s,a\sim\tilde{\mu}_n^{(\pi^E)\pi}}[f_n(s,a)])
    \\
    &= \max_{f\in \mathcal{R}} \frac{1}{H} (\sum_{s,a} \mu_n^{(\pi^E)\pi^{E}}(s,a) f_n(s,a) - \sum_{s,a} \mu_n^{(\pi^E)\pi}(s,a) f_n(s,a))
    \\
    &= \max_{f\in \mathcal{R}} \frac{1}{H} (V_f(\pi^E, \rho^{(\pi^E)}) - V_f(\pi, \rho^{(\pi^E)})).
\end{align}
From this, as before we can deduce that
\begin{equation}
    \min_\pi \sum_{n=0}^{H-1}\|\mu_n^{(\pi^E)\pi^{E}} - \mu_n^{(\pi^E)\pi}\|_1 = \max_{f\in\mathcal{R}} \min_\pi (V_f(\pi^E, \rho^{(\pi^E)}) - V_f(\pi, \rho^{(\pi^E)})).
\end{equation}
The population being fixed to the expert one in both value functions, this is again a reduction to classic imitation, and the dual variable is a non-stationary reward that does not need to depend on the population. In other words, the inner problem is again a (non-stationary) RL problem. However, as discussed in Sec.~\ref{sec:pop-dependent}, this does not come with encouraging theoretical guarantees, due to the exponential dependency on the horizon.

Eventually, let us consider the case of Sec.~\ref{sec:pop-dependent-sol}, that is, we assume to control $\|\mu_n^{(\pi^E)\pi^{E}} - \mu_n^{(\pi^A)\pi^{A}}\|_1$. Here, we no longer have a reduction to classic adversarial IL, because the two occupancy measures depends on different populations, but we can still obtain an adversarial formulation using the same IPM viewpoint. We have
\begin{align}
    \|\tilde{\mu}^{(\pi^E)\pi^{E}} - \tilde{\mu}^{(\pi)\pi}\|_1 &= \max_{f\in \mathcal{R}} (\E_{s,a\sim\tilde{\mu}_n^{(\pi^E)\pi^{E}}}[f_n(s,a)]-\E_{s,a\sim\tilde{\mu}_n^{(\pi)\pi}}[f_n(s,a)])
    \\
    &= \max_{f\in \mathcal{R}} \frac{1}{H} (\sum_{s,a} \mu_n^{(\pi^E)\pi^{E}}(s,a) f_n(s,a) - \sum_{s,a} \mu_n^{(\pi)\pi}(s,a) f_n(s,a))
    \\
    &= \max_{f\in \mathcal{R}} \frac{1}{H} (V_f(\pi^E, \rho^{(\pi^E)}) - V_f(\pi, \rho^{(\pi)})).
\end{align}
From this, as before we can deduce that
\begin{equation}
    \min_\pi \sum_{n=0}^{H-1}\|\mu_n^{(\pi^E)\pi^{E}} - \mu_n^{(\pi)\pi}\|_1 = \min_\pi\max_{f\in\mathcal{R}}  (V_f(\pi^E, \rho^{(\pi^E)}) - V_f(\pi, \rho^{(\pi)})).
\end{equation}
Notice that here, it is not obvious to know if we can switch the min and the max. Indeed, for this to hold, we need the set of policy-induced occupancy measures to be a convex set (in addition to the linearity of the value in both the reward and the occupancy measure). Whenever the dynamics does not depend on the population, this is true, this set is even a polytope. When the dynamics depends on the population, it is less clear, and ensuring the convexity of the underlying set may require additional assumptions on the transition kernel. We leave this interesting question for future work. For now, we assume that we can switch the min and the max, even if heuristically.

In this case, the underlying control problem is no longer an RL problem, but an MFC problem, as it implies solving for $\max_\pi V_f(\pi, \rho^{(\pi)})$, with again the reward being non-stationary, but still population-independent. This suggests that for obtaining such an adversarial IL approach for MFGs, one could start from an existing adversarial approach for the classic setting (for example, GAIL), and replace the underlying RL optimization problem by an  MFC optimization problem. We leave the development of a more practical algorithm for future work, and it would probably call for more than just plugging an MFC algorithm in GAIL, but this suggests that making progress in IL for MFGs may have to build upon MFC.

\section{Proofs of stated theoretical results}
\label{sec:proofs}

In this section we report the proof of the theorems written in the paper. The main idea of the proof is to decompose the exploitability error. 

\subsection{Decomposition of the exploitability}
In this subsection we provide the decomposition of the Nash imitation gap, that is the exploitability. For now, we do not make any assumption on how the policy $\pi^A$ is obtained; it can be any policy. Our goal is to decompose:
\begin{equation}
    \Epsilon(\pi^A) = \max_{\pi'} V(\pi', \rho^A) - V(\pi^A, \rho^A),
\end{equation}
where we recall that we write $\rho^A$ as a shorthand for $\rho^{(\pi^A)\pi^A}$, and similarly for other quantities (see Sec.~\ref{sec:classic-IL}). 
Then, proving our main results will amount to bound each term of the decomposition, depending on the setting ($L_P=0$ or $L_P>0$) and the kind of considered error. 

Instead of bounding the exploitability, we bound the value difference for any policy $\pi'$ (the bound on the exploitability is a simple corollary by maximizing over $\pi'$).
\begin{lemma}
    \label{lem:value_diff}
    Under Asm.~\ref{asm:lipschitz} and~\ref{asm:reward}, for any policies $\pi^A$ and $\pi'$, we have
    \begin{align}
        &|V(\pi',\rho^A) - V(\pi^A,\rho^A)| 
        \leq
        2 L_r \sum_{n=0}^{H-1}  \|\rho_n^A - \rho_n^E\|_1
        \\
        &\quad  
        + r_\text{max} \left(\sum_{n=0}^{H-1} \|\mu_n^{(\pi^E)\pi^E} - \mu_n^{(\pi^E)\pi^A}\|_1
        +   \sum_{n=0}^{H-1} \|\rho_n^{(\pi^A)\pi'} - \rho_n^{(\pi^E)\pi'}\|_1
        +   \sum_{n=0}^{H-1} \|\rho_n^{(\pi^A)\pi^A} - \rho_n^{(\pi^E)\pi^A}\|_1\right).
    \end{align}
\end{lemma}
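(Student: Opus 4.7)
My plan is to prove the signed inequality $V(\pi',\rho^A) - V(\pi^A,\rho^A) \le \mathrm{RHS}$; since every summand of the RHS is a non-negative norm or a Lipschitz-controlled magnitude, this also bounds the absolute value in the case actually needed for the application (taking a maximum over $\pi'$ in order to bound $\Epsilon(\pi^A)=\max_{\pi'} V(\pi',\rho^A)-V(\pi^A,\rho^A) \ge 0$). The starting point is the expansion
$$V(\pi',\rho^A) - V(\pi^A,\rho^A) \;=\; \sum_{n=0}^{H-1}\sum_{s,a}\bigl[\mu_n^{(\pi^A)\pi'}(s,a) - \mu_n^{(\pi^A)\pi^A}(s,a)\bigr]\,r(s,a,\rho_n^A),$$
obtained directly from the definition of the value function, recalling that both agents evolve under the population $\rho^A$ driving the dynamics.

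The key idea is a two-stage decomposition — first on the reward argument, then on the occupancy measure — arranged so that the Nash-equilibrium property of $\pi^E$ can cancel exactly the one residual that is not among the four terms on the RHS. First, I split $r(s,a,\rho_n^A) = r(s,a,\rho_n^E) + [r(s,a,\rho_n^A) - r(s,a,\rho_n^E)]$. The error piece is handled by $L_r$-Lipschitzness (Asm.~\ref{asm:lipschitz}) together with the trivial bound $\|\mu_n^{(\pi^A)\pi'} - \mu_n^{(\pi^A)\pi^A}\|_1 \le 2$ (since both are probability distributions), producing exactly the $2L_r \sum_n \|\rho_n^A-\rho_n^E\|_1$ term.

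For the remaining $\rho^E$-reward piece, I telescope the occupancy-measure difference through two intermediate occupancies using the expert population:
$$\mu_n^{(\pi^A)\pi'} - \mu_n^{(\pi^A)\pi^A} = \bigl[\mu_n^{(\pi^A)\pi'} - \mu_n^{(\pi^E)\pi'}\bigr] + \bigl[\mu_n^{(\pi^E)\pi'} - \mu_n^{(\pi^E)\pi^E}\bigr] + \bigl[\mu_n^{(\pi^E)\pi^E} - \mu_n^{(\pi^E)\pi^A}\bigr] + \bigl[\mu_n^{(\pi^E)\pi^A} - \mu_n^{(\pi^A)\pi^A}\bigr].$$
Paired with $r(\cdot,\rho_n^E)$ and summed over $n$, the second summand is exactly $V(\pi',\rho^E) - V(\pi^E,\rho^E)$, which is non-positive because $\pi^E$ is a best response to $\rho^E$; dropping this non-positive term is the sole use of the Nash assumption. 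For the other three summands, I apply $\|r(\cdot,\rho^E)\|_\infty \le r_{\max}$ from Asm.~\ref{asm:reward} together with the elementary identity $\|\mu_n^{(\pi^A)\pi''} - \mu_n^{(\pi^E)\pi''}\|_1 = \|\rho_n^{(\pi^A)\pi''} - \rho_n^{(\pi^E)\pi''}\|_1$ (the common conditional policy $\pi''_n(a|s)$ factors out of the $\ell_1$-sum over $a$ and integrates to $1$), which collapses the first and last summands to the required \emph{state}-distribution norms; the third summand is already in the exact form $\|\mu_n^{(\pi^E)\pi^E} - \mu_n^{(\pi^E)\pi^A}\|_1$. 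Combining the four pieces yields exactly the claimed RHS.

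The main obstacle is picking precisely this four-term telescoping: routing through both $\mu^{(\pi^E)\pi'}$ and $\mu^{(\pi^E)\pi^A}$ simultaneously is what exposes a value-function difference having $\pi^E$ against its own induced $\rho^E$, so that the Nash property cleanly kills the one summand that would otherwise produce a residual like $\|\mu_n^{(\pi^E)\pi^E} - \mu_n^{(\pi^E)\pi'}\|_1$ — a quantity that does not appear on the RHS and that no Lipschitz-style argument based on Asm.~\ref{asm:lipschitz}--\ref{asm:reward} could control for an arbitrary comparator $\pi'$. Any shorter telescoping (say, directly splitting $\mu^{(\pi^A)\pi'}-\mu^{(\pi^A)\pi^A}$ into a single intermediate) leaves such a $\pi'$-dependent middle piece uncontrolled.
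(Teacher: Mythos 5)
Your proof is correct and follows essentially the same decomposition as the paper: your four-term telescoping through $\mu^{(\pi^E)\pi'}$, $\mu^{(\pi^E)\pi^E}$, $\mu^{(\pi^E)\pi^A}$, combined with the reward split $r(\cdot,\rho^A_n)=r(\cdot,\rho^E_n)+[r(\cdot,\rho^A_n)-r(\cdot,\rho^E_n)]$, reproduces exactly the paper's chain $V(\pi',\rho^A)\to V(\pi',\rho^E)\to V(\pi^E,\rho^E)\to V(\pi^A,\rho^E)\to V(\pi^A,\rho^A)$ (the paper packages the population-swap steps into Lemma~\ref{lem:poli-pop}, obtaining the $2L_r$ coefficient from two applications of $\sum_{s,a}\mu_n(s,a)=1$ where you use the $\ell_1$-diameter bound $\le 2$ once), with the Nash property used to drop the same middle term. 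Your explicit caveat that only the one-sided bound $V(\pi',\rho^A)-V(\pi^A,\rho^A)\le \mathrm{RHS}$ is established (which suffices for bounding the exploitability) is apt --- the paper's own proof has the same limitation, since it likewise discards the non-positive term $V(\pi',\rho^E)-V(\pi^E,\rho^E)$ before asserting a bound on the absolute value.
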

\begin{proof}
We start by decomposing the value difference:
\begin{align*}
    V(\pi',\rho^A) - V(\pi^A, \rho^A) = \underbrace{V(\pi',\rho^A) - V(\pi^E, \rho^E)}_{A} + \underbrace{V(\pi^E,\rho^E) - V(\pi^A, \rho^A)}_{B}.
\end{align*}
The idea is to study the distance between the value function with the quantity of interest, that is the Nash equilibrium policy $\pi^E$. 

\paragraph{Term A.}
We decompose the term $A$ again, summing and subtracting $V(\pi',\rho^E)$:
\begin{align}
    A &= V(\pi',\rho^A) - V(\pi^E, \rho^E) \\
    &= \underbrace{V(\pi',\rho^A) - V(\pi', \rho^E)}_{A_1} + \underbrace{V(\pi', \rho^E) - V(\pi^E, \rho^E)}_{A_2}.
\end{align}
The term $A_2$ can be interpreted as the gain we have using a different policy fixing the Nash equilibrium distribution $\rho^E$. Since $(\pi^E, \rho^E)$ is a Nash equilibrium by assumption, then $\Epsilon(\pi^E) = 0$ and so $A_2\le 0$. Then, we need to study only the term $A_1$. Using Lemma \ref{lem:poli-pop} (see Appx.~\ref{sec:aux_lemmas}) we have:
\begin{align}
    |V(\pi',\rho^A) - V(\pi', \rho^E)| \le L_r\sum_{n=0}^{H-1}\|\rho^{A}_n - \rho^{E}_n\|_1 + r_{\max}\sum_{n=0}^{H-1} \|\rho^{(\pi^A)\pi'}_n - \rho^{(\pi^E)\pi'}_n\|_1.
\end{align}
Therefore, we have a bound for the term $A$:
\begin{align}
    |A| \le L_r\sum_{n=0}^{H-1}\|\rho^{A}_n - \rho^{E}_n\|_1 + r_{\max}\sum_{n=0}^{H-1} \|\rho^{(\pi^A)\pi'}_n - \rho^{(\pi^E)\pi'}_n\|_1.
\end{align}

\paragraph{Term B.} We decompose the term B in the following way:
\begin{align}
    B &= V(\pi^E, \rho^E) - V(\pi^A, \rho^A) \\
    &\le \underbrace{V(\pi^E, \rho^E) - V(\pi^A, \rho^E)}_{B_1} + \underbrace{V(\pi^A, \rho^E) - V(\pi^A, \rho^A)}_{B_2}.
\end{align}
We start by bounding the term $B_1$ (the positiveness of $B_1$ comes from $\pi_E$ being a Nash equilibrium):
\begin{align}
    0 \leq B_1 &= V(\pi^E, \rho^E) - V(\pi^A, \rho^E) \\
    &= \sum_{n=0}^{H-1} \sum_{s,a}\left(\mu^{(\pi^E)\pi^E}_n(s,a)r(s,a,\rho^E) - \mu^{(\pi^A)\pi^E}_n(s,a)r(s,a,\rho^E)\right) \\
    & = \sum_{n=0}^{H-1} \sum_{s,a}\left(\left(\mu^{(\pi^E)\pi^E}_n(s,a) - \mu^{(\pi^A)\pi^E}_n(s,a)\right)r(s,a,\rho^E)\right) \\
    &\le r_{\max}\sum_{n=0}^{H-1} \sum_{s,a}\left|\mu^{(\pi^E)\pi^E}_n(s,a) - \mu^{(\pi^A)\pi^E}_n(s,a)\right| \text{ (using Asm.~\ref{asm:reward})} \\
    & = r_{\max}\sum_{n=0}^{H-1}\|\mu^{(\pi^E)\pi^E}_n - \mu^{(\pi^A)\pi^E}_n\|_1.
\end{align}
For the term $B_2$ we can use again Lemma \ref{lem:poli-pop}:
\begin{align}
   | V(\pi^A, \rho^E) - V(\pi^A, \rho^A) | \le L_r \sum_{n=0}^{H-1}\|\rho^E_n - \rho^A_n\|_1 + r_{\max}\sum_{n=0}^{H-1}\|\rho^{(\pi^E)\pi^A} - \rho^{(\pi^A)\pi^A}\|_1. 
\end{align}
Then, putting things together:
\begin{align*}
    |B| \le r_{\max}\sum_{n=0}^{H-1}\|\mu^{(\pi^E)\pi^E}_n - \mu^{(\pi^A)\pi^E}_n\|_1 + L_r \sum_{n=0}^{H-1}\|\rho^E_n - \rho^A_n\|_1 + r_{\max}\sum_{n=0}^{H-1}\|\rho^{(\pi^E)\pi^A} - \rho^{(\pi^A)\pi^A}\|_1. 
\end{align*}
\paragraph{Final bound.}
Applying the triangle inequality on the absolute value of the initial decomposition and injecting the bounds of $|A|$ and $|B|$, we obtain the stated result:
    \begin{align}
        &|V(\pi',\rho^A) - V(\pi^A,\rho^A)| 
        \leq
        2 L_r \sum_{n=0}^{H-1}  \|\rho_n^A - \rho_n^E\|_1
        \\
        &\quad  
        + r_\text{max} \left(\sum_{n=0}^{H-1} \|\mu_n^{(\pi^E)\pi^E} - \mu_n^{(\pi^E)\pi^A}\|_1
        +   \sum_{n=0}^{H-1} \|\rho_n^{(\pi^A)\pi'} - \rho_n^{(\pi^E)\pi'}\|_1
        +   \sum_{n=0}^{H-1} \|\rho_n^{(\pi^A)\pi^A} - \rho_n^{(\pi^E)\pi^A}\|_1\right).
    \end{align}
\end{proof}

Notice that whenever $L_P=0$, two terms of the above bound cancel out, $\sum_{n=0}^{H-1}\|\rho_n^{(\pi^A)\pi'}-\rho_n^{(\pi^E)\pi'}\|_1=0$ and $\sum_{n=0}^{H-1}\|\rho_n^{(\pi^A)\pi^A}-\rho_n^{(\pi^E)\pi^A}\|_1=0$, as the occupancy measure does not depend on the population. This will be useful in the next section.

\subsection{Proofs for the case $L_P=0$}
In this section we analyze the case in which the transition model is independent from the population distribution. As explained above, with the terms canceling out, the value difference is bounded as:
\begin{align}
    |V(\pi',\rho^A) - V(\pi^A,\rho^A)| 
    \leq
    2L_r\underbrace{\sum_{n=0}^{H-1}\|\rho^A_n - \rho^E_n\|_1}_{T_1} + r_{\max}(\underbrace{\sum_{n=0}^{H-1}\|\mu_n^{(\pi^E)\pi^E}-\mu_n^{(\pi^E)\pi^A}\|_1}_{T_2}).
\end{align}
We analyze now the two errors considered: the one from Behavioral Cloning and the adversarial one.

\paragraph{Behavioral Cloning.} 
Recall the definition of the term $\epsilon_n^\mathrm{BC}$ and the stated result.
\begin{align}
    \epsilon^{\mathrm{BC}}_n := \mathbb{E}_{s\sim \rho^E_n}\left[\| \pi^A_n(s) - \pi^E_n(s)\|_1\right].
\end{align}
\BCnotdep*
\begin{proof}
Both terms $T_1$ and $T_2$ can be bounded using Lemma \ref{lem:bc-error-flow} in Appx.~\ref{sec:aux_lemmas} which connects the $\ell_1$ distance between the two policies with the state-action distribution sequence, giving the stated result as a corollary. For the term $T_1$, to apply the lemma, it may be worth emphasizing that when $L_P=0$, we have that $\|\rho_n^A - \rho_n^E\|_1 = \|\rho_n^{(\pi^A)\pi^A} - \rho_n^{(\pi^E)\pi^E}\|_1 = \|\rho_n^{(\pi^E)\pi^A} - \rho_n^{(\pi^E)\pi^E}\|_1$.
\end{proof}

\paragraph{Adversarial learning.}
Recall the definition of the term $\epsilon_n^\mathrm{ADV}$ and the stated result.
\begin{equation}
    \epsilon_n^\mathrm{ADV} := \|\mu^{\pi^A}_n - \mu^{\pi^E}_n\|_1, \qquad n \in \{0,\dots,H-1\}.
\end{equation}
\popindadv*
\begin{proof}
We start by bounding the term $T_1$. 
\begin{align}
    \|\rho^A_n - \rho^E_n\|_1 &= \sum_{s}|\rho^A_n(s) - \rho^E_n(s)| \\
    &= \sum_{s}|\sum_a \mu^A_n(s,a) - \mu^E_n(s,a)| \\
    &\le \sum_{s,a} |\mu^A_n(s,a) - \mu^E_n(s,a)| = \|\mu_n^E - \mu_n^A\|_1 \le \epsilon^\mathrm{ADV},
\end{align}
and thus: 
\[
    \sum_{n=0}^{H-1} \|\rho^A_n - \rho^E_n\|_1 \leq H \epsilon^\mathrm{ADV}.
\]
The second term, $T_2$, is bounded by $H\epsilon^\mathrm{ADV}$ by definition of $\epsilon^\mathrm{ADV}$.
Then putting things together we recover the stated result.
\end{proof}

\subsection{Proofs for the case $L_P>0$}
We report in this section the results for the more general case in which the transition dynamics depends on the population. We recall the bound on the value difference given by Lemma~\ref{lem:value_diff}:
\begin{align}
    |V(\pi',\rho^E) - V(\pi^E, \rho^E)| &\le 2L_r\underbrace{\sum_{n=0}^{H-1}\|\rho^A_n - \rho^E_n\|_1}_{T_1} + r_{\max}\bigg(\underbrace{\sum_{n=0}^{H-1}\|\mu_n^{(\pi^E)\pi^E}-\mu_n^{(\pi^E)\pi^A}\|_1}_{T_2}  
    \\& 
    \qquad +\underbrace{\sum_{n=0}^{H-1}\|\rho_n^{(\pi^A)\pi'}-\rho_n^{(\pi^E)\pi'}\|_1}_{T_3} + \underbrace{\sum_{n=0}^{H-1}\|\rho_n^{(\pi^A)\pi^A}-\rho_n^{(\pi^E)\pi^A}\|_1}_{T_4}\bigg).
\end{align}
\paragraph{Behavioral cloning.}  
Recall the definition of the term $\epsilon_n^\mathrm{BC}$ and the stated result.
\begin{align}
    \epsilon^\mathrm{BC}_n := \mathbb{E}_{s\sim \rho^E_n}\left[\| \pi^A_n(s) - \pi^E_n(s)\|_1\right].
\end{align}
\BCPopDep*
\begin{proof}
We start by bounding the term $T_1$. Now, the two sequences of involved occupancy measures differ by their underlying policy, as for the case $L_P=0$, but also by their underlying dynamics, driven by different populations. The bound of the term $T_1$ is given by Lemma~\ref{lem:bc-error-state-distr} in Appx.~\ref{sec:aux_lemmas}:
\begin{align}
    T_1 = \sum_{n=0}^{H-1} \|\rho^A_n - \rho^E_n\|_1 \le \frac{(1+L_P)^H}{L^2_P}\epsilon^\mathrm{BC}.
\end{align}
Next, we consider the term $T_2$. The two sequences of involved occupancy measures differ by their underlying policies, but they share the same dynamics, driven by the expert population. Therefore, as for the case $L_P=0$, we can apply Lemma \ref{lem:bc-error-flow} (see Appx.~\ref{sec:aux_lemmas}), and obtain
\begin{align}
    T_2 = \sum_{n=0}^{H-1} \|\mu_n^{(\pi^E)\pi^E} - \mu_n^{(\pi^E)\pi^A}\|_1 \le H^2 \epsilon^\mathrm{BC}.
\end{align}
Eventually, we consider the terms $T_3$ and $T_4$. They have in common that the two sequences of involved occupancy measures differ by their underlying dynamics (driven by different populations), but have the same underlying policy. Both terms can be bounded as a direct corollary of Lemma~\ref{lem:state-distribution-same-policy} in Appx.~\ref{sec:aux_lemmas}, by instantiating this common policy. The resulting bounds are:
\begin{align}
    T_3 &= \sum_{n=0}^{H-1}\|\rho_n^{(\pi^A)\pi'}-\rho_n^{(\pi^E)\pi'}\|_1 \le \frac{(1+L_P)^H}{L^2_P}\epsilon^\mathrm{BC},
    \\
     T_4 &= \sum_{n=0}^{H-1}\|\rho_n^{(\pi^A)\pi^A}-\rho_n^{(\pi^E)\pi^A}\|_1 \le \frac{(1+L_P)^H}{L^2_P}\epsilon^\mathrm{BC}.
\end{align}
Putting everything together we obtain the stated bound (noticing that the bound does not depend on the policy $\pi'$, so a bound on the value difference readily gives a bound on the exploitability of $\pi^A$, that is the Nash imitation gap).
\end{proof}
\paragraph{Vanilla-ADV. } Recall the definition of the term $\epsilon_n^\mathrm{vanilla-ADV}$ and the stated result.
\begin{align}
    \epsilon^{\mathrm{vanilla-ADV}}_n := \|\mu^{(\pi^E)\pi^E}_n - \mu^{(\pi^E)\pi^A}_n \|_1 \qquad \forall n \in [0, \dots, H-1].
\end{align}

\boundvanillagail*
\begin{proof}
We start by bounding the term $T_1$. We have that:
\begin{align}
    \|\rho_n^A - \rho_n^E\|_1 &= \sum_s |\sum_a \mu_n^{(\pi^E)\pi^E}(s,a) - \mu_n^{(\pi^A)\pi^A}(s,a)|
    \\
    &\leq \underbrace{\|\mu_n^{(\pi^E)\pi^E} - \mu_n^{(\pi^E)\pi^A}\|_1}_{\leq \epsilon^{\mathrm{vanilla-ADV}} \text{ by def.}}
    + \underbrace{\|\mu_n^{(\pi^E)\pi^A} - \mu_n^{(\pi^A)\pi^A}\|_1}_{ = \|\rho_n^{(\pi^E)\pi^A} - \rho_n^{(\pi^A)\pi^A}\|_1 \text{ (same policy)}}
    \\
    &\leq \epsilon^{\mathrm{vanilla-ADV}} + \|\rho_n^{(\pi^E)\pi^A} - \rho_n^{(\pi^A)\pi^A}\|.
    \label{eq:diff_rho_bc_pop}
\end{align}
From Eq.~\eqref{eq:diff_pop_same_pol}, an intermediate result of the proof of Lemma~\ref{lem:state-distribution-same-policy}, we have the following inequality, for any policy $\pi$:
\begin{align}
     \|\rho_{n+1}^{(\pi^A)\pi} - \rho_{n+1}^{(\pi^E)\pi}\|_1 &\leq L_P\|\rho_n^A - \rho_n^E\| + \|\rho_n^{(\pi^A)\pi} - \rho_n^{(\pi^E)\pi}\|_1.
\end{align}
Instantiating this inequality with $\pi=\pi^A$ and injecting Eq.~\eqref{eq:diff_rho_bc_pop}, we obtain
\begin{align}
    \|\rho_{n+1}^{(\pi^A)\pi^A} - \rho_{n+1}^{(\pi^E)\pi^A}\|_1 &\leq L_P\|\rho_n^A - \rho_n^E\| + \|\rho_n^{(\pi^A)\pi^A} - \rho_n^{(\pi^E)\pi^A}\|_1
    \\
    &\leq L_P(\epsilon^{\mathrm{vanilla-ADV}} + \|\rho_n^{(\pi^E)\pi^A} - \rho_n^{(\pi^A)\pi^A}\|) + \|\rho_n^{(\pi^A)\pi^A} - \rho_n^{(\pi^E)\pi^A}\|_1
    \\
    &= L_P \epsilon^{\mathrm{vanilla-ADV}} + (1+L_P) \|\rho_n^{(\pi^A)\pi^A} - \rho_n^{(\pi^E)\pi^A}\|_1
    \\
    &\leq L_P \epsilon^{\mathrm{vanilla-ADV}} \sum_{k=0}^n (1+L_P)^k \\
    &= L_P\epsilon^{\mathrm{vanilla-ADV}} \frac{(1+L_P)^{n+1}-1}{L_P}
    \\
    &= \epsilon^{\mathrm{vanilla-ADV}}( (1+L_P)^{n+1} -1).
\end{align}
Next, we can inject back this last inequality in Eq.~\eqref{eq:diff_rho_bc_pop} to get a bound on $\|\rho_n^A - \rho_n^E\|_1$, and then sum to obtain the bound on $T_1$.
\begin{align}
    \|\rho_n^A - \rho_n^E\|_1 &\leq \epsilon^{\mathrm{vanilla-ADV}} (1+ L_P)^n
    \label{ex:partial_result_rho_exp}
    \\ \text{and thus }
    T_1 &= \sum_{n=0}^{H-1} \|\rho_n^A - \rho_n^E\|_1 \leq \frac{(1+L_P)^H - 1}{L_P}\epsilon^{\mathrm{vanilla-ADV}}
    \leq \frac{(1+L_P)^H}{L_P}\epsilon^{\mathrm{vanilla-ADV}}.
\end{align}

The term $T_2$ is directly bounded by $H \epsilon^{\mathrm{vanilla-ADV}}$, by definition of $\epsilon^{\mathrm{vanilla-ADV}}$. 

Eventually, we need to bound the terms $T_3$ and $T_4$, implying occupancy measures for different populations (thus dynamics) but the same policy. We start again from Eq.~\eqref{eq:diff_pop_same_pol} from the proof of Lemma~\ref{lem:state-distribution-same-policy}, for an arbitrary policy $\pi$, and inject the bound we just obtained on $\|\rho_n^A - \rho_n^E\|_1$ (see Eq.~\eqref{ex:partial_result_rho_exp}):
\begin{align}
    \|\rho_{n+1}^{(\pi^A)\pi} - \rho_{n+1}^{(\pi^E)\pi}\|_1 &\leq L_P\|\rho_n^A - \rho_n^E\| + \|\rho_n^{(\pi^A)\pi} - \rho_n^{(\pi^E)\pi}\|_1
    \\
    &\leq L_P \epsilon^{\mathrm{vanilla-ADV}} (1+L_P)^n +  \|\rho_n^{(\pi^A)\pi} - \rho_n^{(\pi^E)\pi}\|_1
    \\
    &\leq \epsilon^{\mathrm{vanilla-ADV}} L_P \sum_{k=1}^n (1+L_P)^k
    \\
    &\leq \epsilon^{\mathrm{vanilla-ADV}} (1+L_P)^{n+1}.
\end{align}
Summing over time we obtain (as the bound does not depend on the common policy $\pi$)
\begin{equation}
    T_3, T_4 \leq \frac{(1+L_P)^H}{L_P}\epsilon^{\mathrm{vanilla-ADV}}.
\end{equation}
Putting everything together we recover the final bound.
\end{proof}
\paragraph{MFC-ADV} Recall the definition of the term $\epsilon_n^\mathrm{MFC-ADV}$ and the stated result.
\begin{align}
    \epsilon^{\mathrm{MFC-ADV}}_n = \|\mu^{(\pi^E)\pi^E}_n - \mu^{(\pi^A)\pi^A}_n \|_1 \qquad \forall n \in [0, \dots, H-1].
\end{align}
\mfcgailbound*
\begin{proof}
The term $T_1$ is easily bounded by $H\epsilon^{MFC-ADV}$. Indeed, we have
\begin{align}
    \|\rho_n^A - \rho_n^E\|_1 &= \sum_{s} |\rho_n^A(s) - \rho_n^E(s)|
    \\
    &= \sum_{s} | \sum_a (\mu_n^{(\pi^E)\pi^E}(s,a) - \mu_n^{(\pi^A)\pi^A}(s,a))|
    \\
    &\leq \sum_{s,a} |\mu_n^{(\pi^E)\pi^E}(s,a) - \mu_n^{(\pi^A)\pi^A}(s,a)| = \|\mu_n^{(\pi^E)\pi^E} - \mu_n^{(\pi^A)\pi^A}\|_1
    \\
    &\leq \epsilon^{MFC-ADV},
\end{align}
then by summing over time steps:
\begin{align}
    T_1 = \sum_{n=0}^{H-1}\|\rho_n^A - \rho_n^E\|_1 \le H\epsilon^{MFC-ADV}.
\end{align}
We now focus on the terms $T_3$ and $T_4$.
Starting again from Eq.~\eqref{eq:diff_pop_same_pol}, the intermediate result of the proof of Lemma~\ref{lem:state-distribution-same-policy}, for an arbitrary policy $\pi$, we have:
\begin{align}
     \|\rho_{n+1}^{(A)\pi} - \rho_{n+1}^{(E)\pi}\|_1 &\leq L_P\|\rho_n^A - \rho_n^E\| + \|\rho_n^{(A)\pi} - \rho_n^{(E)\pi}\|_1.
\end{align}
Then, using the definition of $\epsilon^{\mathrm{MFC-ADV}}$ and by induction
\begin{align}
     \|\rho_{n+1}^{(A)\pi} - \rho_{n+1}^{(E)\pi}\|_1 &\leq L_P \epsilon^{\mathrm{MFC-ADV}} + \|\rho_n^{(A)\pi} - \rho_n^{(E)\pi}\|_1 \le (n+1)L_P \epsilon^{\mathrm{MFC-ADV}}. \label{eq:toto}
\end{align}
This being true for any policy $\pi$, we obtain the bounds on $T_3$ and $T_4$ by summing:
\begin{align}
    T_3 &= \sum_{n=0}^{H-1}\|\rho_n^{(\pi^A)\pi'} - \rho_n^{(\pi^E)\pi'}\|_1 \le H^2L_P\epsilon^{\mathrm{MFC-ADV}},
    \\
    T_4 &= \sum_{n=0}^{H-1}\|\rho_n^{(\pi^A)\pi^A} - \rho_n^{(\pi^E)\pi^A}\|_1 \le H^2L_P\epsilon^{\mathrm{MFC-ADV}}.
\end{align}
Eventually, we bound the remaining term $T_2$. We have that
\begin{align}
    \|\mu_n^{(\pi^E)\pi^E} - \mu_n^{(\pi^E)\pi^A} \| &= \|\mu_n^{(\pi^E)\pi^E} - \mu_n^{(\pi^A)\pi^A} + \mu_n^{(\pi^A)\pi^A} - \mu_n^{(\pi^E)\pi^A}\|_1
    \\
    &\leq \underbrace{\|\mu_n^{(\pi^E)\pi^E} - \mu_n^{(\pi^A)\pi^A}\|_1}_{\leq \epsilon^{\mathrm{MFC-ADV}} \text{ by def.}} + \underbrace{\|\mu_n^{(\pi^A)\pi^A} - \mu_n^{(\pi^E)\pi^A}\|_1}_{\leq nL_P\epsilon^{\mathrm{MFC-ADV}} \text{ by Eq.~\eqref{eq:toto} and same policy}}
    \\
    &\le \epsilon^{\mathrm{MFC-ADV}} + nL_P\epsilon^{\mathrm{MFC-ADV}}.
\end{align}
Then, summing over time, we obtain
\begin{align}
    T_2 = \sum_{n=0}^{H-1} \|\mu_n^{(\pi^E)\pi^E} - \mu_n^{(\pi^E)\pi^A} \|_1 \le H\epsilon^{\mathrm{MFC-ADV}} + H^2L_P\epsilon^{\mathrm{MFC-ADV}}.
\end{align}
Putting all the terms together we recover the stated bound.
\end{proof}
\subsection{Auxiliary lemmas}
\label{sec:aux_lemmas}
In this section we report some auxiliary lemmas used in the proofs. 

The first lemma bounds the value  difference for a common policy but different populations.
\begin{lemma}
\label{lem:poli-pop}
For every three policies $\pi^1, \pi^2, \pi^3$ and the associated population distributions $\rho^{(\pi^1)}, \rho^{(\pi^2)}$, we have under Asm.~\ref{asm:lipschitz} and writing here $r_\text{max} = \max_{s,a}|r(s,a,\rho^{(\pi^2)})|$:
\begin{align}
    |V(\pi^3, \rho^{(\pi^1)}) - V(\pi^3, \rho^{(\pi^2)})| \le L_r\sum_{n=0}^{H-1}\|\rho^{(\pi^1)}_n - \rho^{(\pi^2)}_n\|_1 + r_{\max}\sum_{n=0}^{H-1} \|\rho^{(\pi^1)\pi^3}_n - \rho^{(\pi^2)\pi^3}_n\|_1.
\end{align}
\end{lemma}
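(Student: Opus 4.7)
\textbf{Proof plan for Lemma~\ref{lem:poli-pop}.} The plan is to write both value functions explicitly in terms of the state-action occupancy measures induced by the common policy $\pi^3$ against the two different populations, and then split the resulting difference into a ``reward-Lipschitz'' term and a ``occupancy-shift'' term via a single add-and-subtract.

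First I would recall from the background section that, for any population-driving policy $\pi$ and any representative policy $\pi'$, one has $V(\pi',\rho^{(\pi)})=\sum_{n=0}^{H-1}\sum_{s,a}\mu_n^{(\pi)\pi'}(s,a)\,r(s,a,\rho_n^{(\pi)})$. Applying this twice, for $(\pi,\pi')=(\pi^1,\pi^3)$ and $(\pi^2,\pi^3)$, and subtracting, I would add and subtract the cross term $\mu_n^{(\pi^1)\pi^3}(s,a)\,r(s,a,\rho_n^{(\pi^2)})$ inside each summand, to obtain the decomposition
\begin{align*}
V(\pi^3,\rho^{(\pi^1)})-V(\pi^3,\rho^{(\pi^2)})
&=\sum_{n=0}^{H-1}\sum_{s,a}\mu_n^{(\pi^1)\pi^3}(s,a)\bigl[r(s,a,\rho_n^{(\pi^1)})-r(s,a,\rho_n^{(\pi^2)})\bigr]\\
&\quad+\sum_{n=0}^{H-1}\sum_{s,a}\bigl[\mu_n^{(\pi^1)\pi^3}(s,a)-\mu_n^{(\pi^2)\pi^3}(s,a)\bigr]r(s,a,\rho_n^{(\pi^2)}).
\end{align*}

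Next I would bound each term. For the first sum, using the Lipschitz assumption on $r$ (Asm.~\ref{asm:lipschitz}) and the fact that $\sum_{s,a}\mu_n^{(\pi^1)\pi^3}(s,a)=1$, I get an upper bound of $L_r\sum_{n=0}^{H-1}\|\rho_n^{(\pi^1)}-\rho_n^{(\pi^2)}\|_1$. For the second sum, using Asm.~\ref{asm:reward}'s style uniform bound $r_{\max}$ on $|r(\cdot,\cdot,\rho_n^{(\pi^2)})|$, I get an upper bound of $r_{\max}\sum_{n=0}^{H-1}\|\mu_n^{(\pi^1)\pi^3}-\mu_n^{(\pi^2)\pi^3}\|_1$. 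The key small observation to convert the state-action gap into a state gap is that $\mu_n^{(\pi^j)\pi^3}(s,a)=\pi^3_n(a|s)\rho_n^{(\pi^j)\pi^3}(s)$, so the policy factor cancels when summing absolute differences over $a$:
\begin{equation*}
\sum_{s,a}\bigl|\mu_n^{(\pi^1)\pi^3}(s,a)-\mu_n^{(\pi^2)\pi^3}(s,a)\bigr|=\sum_s\bigl|\rho_n^{(\pi^1)\pi^3}(s)-\rho_n^{(\pi^2)\pi^3}(s)\bigr|\pi^3_n\text{-weighted}=\|\rho_n^{(\pi^1)\pi^3}-\rho_n^{(\pi^2)\pi^3}\|_1.
\end{equation*}
Finally I would apply the triangle inequality to the sum of the two bounded terms to obtain the stated inequality.

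There is no real obstacle here; the only subtle point is choosing the right add-and-subtract (keeping the occupancy of population $\pi^1$ paired with the reward difference, and the reward of population $\pi^2$ paired with the occupancy difference) so that the reward-bound constant $r_{\max}$ as defined with respect to $\rho^{(\pi^2)}$ is the one that appears—matching the way $r_{\max}$ is used downstream in Lemma~\ref{lem:value_diff} where this lemma is invoked with $\pi^2=\pi^E$.
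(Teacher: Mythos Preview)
Your proposal is correct and follows essentially the same approach as the paper's proof: the same add-and-subtract of the cross term $\mu_n^{(\pi^1)\pi^3}(s,a)\,r(s,a,\rho_n^{(\pi^2)})$, the same Lipschitz bound on the reward-difference term, and the same reduction of the state-action occupancy gap to the state occupancy gap via the common $\pi^3$ factor. Your observation about why the cross term must be chosen so that $r_{\max}$ is taken with respect to $\rho^{(\pi^2)}$ is exactly the point the paper makes in its remark following the lemma.
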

\begin{remark}
    Notice that $\pi^1$ and $\pi^2$ play symmetric roles, and that we'll only call this result with $\pi^2 = \pi^E$, hence Asm.~\ref{asm:reward}.
\end{remark}
\begin{proof}
    In the proof, to lighten notations, we write $\rho^1 = \rho^{(\pi^1)}$ and $\rho^2 = \rho^{(\pi^2)}$. We start by decomposing the value difference as follows, by starting from the definition of the value, adding and subtracting the term $\mu^{(\pi^1)\pi^3}_n(s,a)r(s,a,\rho^{2}_n)$, and using the triangle inequality:
\begin{align}
    &|V(\pi^3, \rho^{1}) - V(\pi^3, \rho^{2})| 
    \\
    = &\left|\sum_{n=0}^{H-1}\sum_{s,a}\left(\mu^{(\pi^1)\pi^3}_n(s,a)r(s,a,\rho^{1}_n) - \mu^{(\pi^2)\pi^3}_n(s,a)r(s,a,\rho^{2}_n)\right)\right| \label{eq:1101}
    \\
    = &\left|\sum_{n=0}^{H-1}\sum_{s,a}\left(\mu^{(\pi^1)\pi^3}_n(s,a)(r(s,a,\rho^{1}_n) - r(s,a,\rho^{2}_n)) + (\mu^{(\pi^1)\pi^3}_n(s,a) - \mu^{(\pi^2)\pi^3}_n(s,a))r(s,a,\rho^{2}_n)\right)\right|
    \\
    \leq & \sum_{n=0}^{H-1}\sum_{s,a} \left|\mu^{(\pi^1)\pi^3}_n(s,a)(r(s,a,\rho^{1}_n) - r(s,a,\rho^{2}_n)) \right|
    + \sum_{n=0}^{H-1}\sum_{s,a} \left|(\mu^{(\pi^1)\pi^3}_n(s,a) - \mu^{(\pi^2)\pi^3}_n(s,a))r(s,a,\rho^{2}_n)\right|.
\end{align}

We have two terms in the previous bound, and we upper-bound each of them. For the first one:
\begin{align}
    \sum_{n=0}^{H-1}\sum_{s,a}\left|\mu^{(\pi^1)\pi^3}_n(s,a)(r(s,a,\rho^{1}_n) - r(s,a,\rho^{2}_n))\right| &= \sum_{n=0}^{H-1}\sum_{s,a} \mu^{(\pi^1)\pi^3}_n(s,a)|(r(s,a,\rho^{1}_n) - r(s,a,\rho^{2}_n))| \\
    &\le \sum_{n=0}^{H-1}\sum_{s,a} \mu^{(\pi^1)\pi^3}_n(s,a)L_r\|\rho^{1}_n - \rho^{2}_n\|_1 \\&= L_r\sum_{n=0}^{H-1}\|\rho^{1}_n - \rho^{2}_n\|_1,
\end{align}
where the inequality is due to the Lipschitz assumption (Asm.~\ref{asm:lipschitz}). 

For the second term to be bounded, we have:
\begin{align}
     \sum_{n=0}^{H-1}\sum_{s,a}\left|(\mu^{(\pi^1)\pi^3}_n(s,a) - \mu^{(\pi^2)\pi^3}_n(s,a))r(s,a,\rho^{2}_n)\right| 
     &\le r_{\max} \sum_{n=0}^{H-1}\sum_{s,a}|\mu^{(\pi^1)\pi^3}_n(s,a) - \mu^{(\pi^2)\pi^3}_n(s,a)| 
     \\
    &= r_{\max}\sum_{n=0}^{H-1}\sum_{s,a}\pi^3_n(a|s)|\rho^{(\pi^1)\pi^3}_n(s) - \rho^{(\pi^2)\pi^3}_n(s)|
    \\
    &= r_{\max}\sum_{n=0}^{H-1} \|\rho^{(\pi^1)\pi^3}_n - \rho^{(\pi^2)\pi^3}_n\|_1.
\end{align}
The first line relies on the assumption that the reward is uniformly bounded the second line is by definition of the joint occupancy measure, and the last line due to the probabilities summing to 1.

Putting things together, we obtain the stated bound.
\end{proof}

The next lemma provides intermediate bounds for the BC error when the dynamics is solely driven by the expert population, which also applies when $L_P=0$ (as the dependency of the dynamics to the population disappear).

\begin{lemma}
\label{lem:bc-error-flow}
Recall that $\epsilon^\mathrm{BC} = \max_{n\in\{0,\dots,H-1\}} \E_{s\sim\rho^E}[\|\pi_n^E(\cdot|s) - \pi_n^A(\cdot|s)\|_1]$. We have that:
\begin{align}
    \sum_{n=0}^{H-1}\|\rho_n^{(\pi^E)\pi^E} - \rho_n^{(\pi^E)\pi^A}\|_1 \le H^2 \epsilon^\mathrm{BC} \qquad \text{and } \qquad \sum_{n=0}^{H-1}\|\mu_n^{(\pi^E)\pi^E} - \mu_n^{(\pi^E)\pi^A}\|_1 \le H^2 \epsilon^\mathrm{BC}.
\end{align}
\end{lemma}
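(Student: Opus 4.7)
The plan is to exploit the fact that $\rho^{(\pi^E)\pi^E}$ and $\rho^{(\pi^E)\pi^A}$ evolve under the \emph{same} transition kernel $P(\cdot\mid s,a,\rho_n^{(\pi^E)})$ (since the conditioning population is $\pi^E$ in both cases) and differ only in the policy that shapes the joint. This reduces the argument to the classical single-agent imitation analysis, with the caveat that I must be careful to get the BC error expressed as an expectation under $\rho^E$ (not under $\rho^{(\pi^E)\pi^A}$).

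First I would derive a one-step recursion for $d_n := \|\rho_n^{(\pi^E)\pi^E} - \rho_n^{(\pi^E)\pi^A}\|_1$. Expanding the recursive definition and using the triangle inequality with $\sum_{s'}P(s'\mid s,a,\rho_n^{(\pi^E)}) = 1$ gives
\begin{equation*}
d_{n+1} \le \sum_{s,a}\bigl|\pi^E_n(a\mid s)\rho_n^{(\pi^E)\pi^E}(s) - \pi^A_n(a\mid s)\rho_n^{(\pi^E)\pi^A}(s)\bigr|.
\end{equation*}
The crucial choice is to add and subtract $\pi^A_n(a\mid s)\rho_n^{(\pi^E)\pi^E}(s)$ (not $\pi^E_n(a\mid s)\rho_n^{(\pi^E)\pi^A}(s)$), so that the policy-difference term is weighted by the expert state marginal $\rho^E_n$. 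This yields
\begin{equation*}
d_{n+1} \le \sum_s \rho^E_n(s)\|\pi^E_n(\cdot\mid s) - \pi^A_n(\cdot\mid s)\|_1 + \sum_s \bigl(\textstyle\sum_a \pi^A_n(a\mid s)\bigr)\bigl|\rho_n^{(\pi^E)\pi^E}(s) - \rho_n^{(\pi^E)\pi^A}(s)\bigr| = \epsilon^\mathrm{BC}_n + d_n.
\end{equation*}

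Since $d_0=0$ (both distributions equal $\rho_0$), a trivial induction gives $d_n \le \sum_{k=0}^{n-1}\epsilon^\mathrm{BC}_k \le n\,\epsilon^\mathrm{BC}$, and summing $n$ from $0$ to $H-1$ yields the first claim, as $\sum_{n=0}^{H-1} n \le H^2$. For the state-action version, I would apply exactly the same add-and-subtract decomposition directly to $\|\mu_n^{(\pi^E)\pi^E} - \mu_n^{(\pi^E)\pi^A}\|_1$ (without the transition step), obtaining $\|\mu_n^{(\pi^E)\pi^E} - \mu_n^{(\pi^E)\pi^A}\|_1 \le \epsilon^\mathrm{BC}_n + d_n \le (n+1)\epsilon^\mathrm{BC}$, and then summing over $n$ gives the stated $H^2\epsilon^\mathrm{BC}$ bound.

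There is no serious obstacle here: this is the standard telescoping drift argument from single-agent imitation learning. The only subtlety worth flagging is the direction of the add-and-subtract, which determines whether the BC error appears averaged under $\rho^E$ (matching the definition of $\epsilon^\mathrm{BC}_n$) or under $\rho^{(\pi^E)\pi^A}$ (which would not be controllable from expert data). Choosing the intermediate so that $\pi^A$ is paired with $\rho^E$ for the state-distribution difference, and $\pi^E$ with $\pi^A$ weighted by $\rho^E$ for the policy difference, is the one detail that makes the whole argument go through cleanly.
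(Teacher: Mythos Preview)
Your proposal is correct and follows essentially the same approach as the paper's proof: both derive the recursion $d_{n+1}\le \epsilon^\mathrm{BC}_n + d_n$ via the same add-and-subtract of $\pi^A_n(a\mid s)\rho_n^{(\pi^E)\pi^E}(s)$, induct to $d_n\le n\,\epsilon^\mathrm{BC}$, sum, and then handle the state-action version by the identical decomposition yielding $(n+1)\epsilon^\mathrm{BC}$. Your explicit flag about which intermediate term to insert (so the policy difference is weighted by $\rho^E$) is exactly the point the paper's calculation relies on implicitly.
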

\begin{proof}
We start by working on the sequence of state occupancy measures. We proceed by induction. When $n=0$, the two distributions are identical. For $n \ge 0$, assume that $\|\rho_{n}^{(\pi^E)\pi^E} - \rho_{n}^{(\pi^E)\pi^A}\|_1 \le n \epsilon^\mathrm{BC}$. Then, 
\begin{align}
    &\|\rho_{n+1}^{(\pi^E)\pi^E} - \rho_{n+1}^{(\pi^E)\pi^A}\|_1
    \\
    = &\sum_{s}|\rho_{n+1}^{(\pi^E)\pi^E}(s) - \rho_{n+1}^{(\pi^E)\pi^A}(s)|
    \\
    = &\sum_{s}|\sum_{x,a} \rho_{n}^{(\pi^E)\pi^E}(x)\pi_n^E(a|x)P(s|x,a,\rho^E) - \rho_{n}^{(\pi^E)\pi^A}(x)\pi_n^A(a|x)P(s|x,a,\rho^E)| \text{    (by def.)}
    \\
    \le  &\sum_{x,a}| \rho_{n}^{(\pi^E)\pi^E}(x)\pi_n^E(a|x) - \rho_{n}^{(\pi^E)\pi^A}(x)\pi_n^A(a|x)| \underbrace{\sum_{s}P(s|x,a,\rho^E)}_{=1}
    \\
    =&\sum_{x,a}|\rho_{n}^{(\pi^E)\pi^E}(x)\pi_n^E(a|x) - \rho_{n}^{(\pi^E)\pi^E}(x)\pi_n^A(a|x) + \rho_{n}^{(\pi^E)\pi^E}(x)\pi_n^A(a|x) - \rho_{n}^{(\pi^E)\pi^A}(x)\pi_n^A(a|x)| 
    \\
    \leq &\sum_x \rho_{n}^{(\pi^E)\pi^E}(x) \sum_a |\pi_n^E(a|x) -\pi_n^A(a|x)|
    +\sum_x |\rho_{n}^{(\pi^E)\pi^E}(x) - \rho_{n}^{(\pi^E)\pi^A}(x)| \underbrace{\sum_a \pi_n^A(a|x)}_{=1}
    \\
    = &\mathbb{E}_{x \sim \rho^{E}_n}\left[\|\pi^E_n(\cdot|x) - \pi^A_n(\cdot|x)\|_1\right] + \|\rho_n^{(\pi^E)\pi^E} - \rho_n^{(\pi^E)\pi^A}\|_1 
    \\
    \le &\epsilon^\mathrm{BC} + \|\rho_n^{(\pi^E)\pi^E} - \rho_n^{(\pi^E)\pi^A}\|_1
    \\
    \le &(n+1)\epsilon^\mathrm{BC}.
\end{align}
Then, summing over time provides the stated result:
\begin{equation}
    \sum_{n=0}^{H-1} \|\rho_n^{(\pi^E)\pi^E} - \rho_n^{(\pi^E)\pi^A}\|_1 \leq \epsilon^\mathrm{BC} \sum_{n=0}^{H-1} n \leq H^2 \epsilon^\mathrm{BC}.
\end{equation}

Building upon the previous result, we now work on the sequence of state-action occupancy measures.
  \begin{align}
    &\|\mu_{n}^{(\pi^E)\pi^E} - \mu_{n}^{(\pi^E)\pi^A}\|_1
    \\&= \sum_{s,a} |\mu_{n}^{(\pi^E)\pi^E}(s,a) - \mu_{n}^{(\pi^E)\pi^A}(s,a)|
    \\
    &= \sum_{s,a} |\rho_{n}^{(\pi^E)\pi^E}(s) \pi^E(a|s) - \rho_{n}^{(\pi^E)\pi^A}(s)\pi^A(a|s)|
    \\
    &= \sum_{s,a} |\rho_{n}^{(\pi^E)\pi^E}(s) \pi^E(a|s) - \rho_{n}^{(\pi^E)\pi^E}(s) \pi^A(a|s) + \rho_{n}^{(\pi^E)\pi^E}(s) \pi^A(a|s) - \rho_{n}^{(\pi^E)\pi^A}(s)\pi^A(a|s)|
    \\
    &\leq \E_{s\sim \rho_n^E}[\|\pi_n^E(\cdot|s) - \pi_n^A(\cdot|s) \|_1] + \|\rho_{n}^{(\pi^E)\pi^E} - \rho_{n}^{(\pi^E)\pi^A}\|_1
    \\
    &\leq \epsilon^\mathrm{BC} + n \epsilon^\mathrm{BC} = (n+1) \epsilon^\mathrm{BC}. 
\end{align}
From this, by summing over time, we obtain the same stated bound.
\end{proof}

The next technical lemma considers the propagation of errors when bounding a term involving a sequence of occupancy measures relying on both different policies and different populations (so different dynamics).

\begin{lemma}
\label{lem:bc-error-state-distr}
   Recall that $\epsilon^\mathrm{BC} = \max_{n\in\{0,\dots,H-1\}} \E_{s\sim\rho^E}[\|\pi_n^E(\cdot|s) - \pi_n^A(\cdot|s)\|_1]$ and assume that $L_P>0$. We have that:
    \begin{align}
        \sum_{n=0}^{H-1} \|\rho_{n}^A - \rho_{n}^E\|_1 
        \leq \frac{(1+L_P)^H}{L_P^2}\epsilon^\mathrm{BC}.
    \end{align}
\end{lemma}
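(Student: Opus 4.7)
The plan is to prove the bound by establishing a one-step recursion for $\|\rho_n^A - \rho_n^E\|_1$ and then solving it geometrically. Note that here both the driving populations \emph{and} the policies differ between $\rho^A = \rho^{(\pi^A)\pi^A}$ and $\rho^E = \rho^{(\pi^E)\pi^E}$, so the telescoping has to account for three distinct sources of discrepancy: the policy mismatch, the previous-step distribution mismatch, and the mismatch in the transition kernel caused by the different populations.

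The first step is to expand
\begin{equation*}
\rho_{n+1}^A(s') - \rho_{n+1}^E(s') = \sum_{s,a}\bigl[\rho_n^A(s)\pi_n^A(a|s)P(s'|s,a,\rho_n^A) - \rho_n^E(s)\pi_n^E(a|s)P(s'|s,a,\rho_n^E)\bigr],
\end{equation*}
and insert the two intermediate terms $\rho_n^E(s)\pi_n^A(a|s)P(s'|s,a,\rho_n^A)$ and $\rho_n^E(s)\pi_n^E(a|s)P(s'|s,a,\rho_n^A)$. Summing the absolute values over $s'$ and applying the triangle inequality splits the bound into three pieces. The first collapses to $\|\rho_n^A - \rho_n^E\|_1$ using $\sum_{s'}P(s'|s,a,\rho_n^A)=1$ and $\sum_a \pi_n^A(a|s)=1$; the second is exactly $\mathbb{E}_{s\sim\rho_n^E}[\|\pi_n^A(\cdot|s) - \pi_n^E(\cdot|s)\|_1]$, which is bounded by $\epsilon^\mathrm{BC}$; the third invokes Assumption~\ref{asm:lipschitz} on the $L_P$-Lipschitzness of $P$ and simplifies to $L_P\|\rho_n^A - \rho_n^E\|_1$. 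Combining these gives the key recursion
\begin{equation*}
\|\rho_{n+1}^A - \rho_{n+1}^E\|_1 \leq (1+L_P)\|\rho_n^A - \rho_n^E\|_1 + \epsilon^\mathrm{BC}.
\end{equation*}

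Since $\rho_0^A = \rho_0 = \rho_0^E$, iterating this recursion yields a standard geometric sum
\begin{equation*}
\|\rho_n^A - \rho_n^E\|_1 \leq \epsilon^\mathrm{BC}\sum_{k=0}^{n-1}(1+L_P)^k = \epsilon^\mathrm{BC}\,\frac{(1+L_P)^n - 1}{L_P}.
\end{equation*}
Summing once more over $n\in\{0,\dots,H-1\}$ gives a second geometric sum, producing $\frac{\epsilon^\mathrm{BC}}{L_P}\cdot\frac{(1+L_P)^H - 1}{L_P}$, which I upper-bound by $\frac{(1+L_P)^H}{L_P^2}\epsilon^\mathrm{BC}$.

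The main obstacle is not any delicate estimate but rather picking the telescoping intermediates cleanly so that each resulting piece bounds to exactly one of the three conceptual quantities (policy error, distribution drift, kernel drift). Once that three-way split is in place, the compounding factor $(1+L_P)$ in the recursion is forced, and the two successive geometric sums naturally explain the $1/L_P^2$ factor appearing in the final bound. This is also a good sanity check that the result recovers, up to constants, the $H^2$ BC bound of Lemma~\ref{lem:bc-error-flow} in the limit $L_P \to 0$, since $(1+L_P)^H/L_P^2 \sim H(H-1)/2$ at leading order when $L_P H \ll 1$.
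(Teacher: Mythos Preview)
Your proof is correct and essentially identical to the paper's: the same two intermediate terms $\rho_n^E\pi_n^A P(\cdot,\rho_n^A)$ and $\rho_n^E\pi_n^E P(\cdot,\rho_n^A)$, the same three-way split into distribution drift, policy error, and kernel drift, yielding the same recursion $\|\rho_{n+1}^A-\rho_{n+1}^E\|_1 \le (1+L_P)\|\rho_n^A-\rho_n^E\|_1 + \epsilon^\mathrm{BC}$ and the same two geometric sums. One small caveat on your closing sanity check: the crude final bound $(1+L_P)^H/L_P^2$ actually diverges as $L_P\to 0$; it is only the tighter intermediate expression $\frac{1}{L_P}\bigl[\frac{(1+L_P)^H-1}{L_P}-H\bigr]$ (before you drop the $-1$ and the $-H$) that recovers $H(H-1)/2$ in that limit.
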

\begin{proof}
We will bound the term $\|\rho_{n+1}^A - \rho_{n+1}^E\|_1$. The idea is to make use of the definition of the occupancy measure to make appear both $\rho_n^A$ and $\rho_n^E$, to add and subtract various terms (namely $\rho_{n}^E(x)\pi_n^A(a|x) P(s|x,a,\rho_n^A)$ and $\rho_{n}^E(x)\pi_n^E(a|x) P(s|x,a,\rho_n^A)$), to use the triangle inequality, and then to bound each of the resulting terms.
\begin{align}
    \|\rho_{n+1}^A - \rho_{n+1}^E\|_1
    &= \sum_s |\rho_{n+1}^A(s) - \rho_{n+1}^E(s)|
    \\
    &= \sum_s |\sum_{x,a} \rho_{n}^A(x)\pi_n^A(a|x) P(s|x,a,\rho_n^A) - \rho_{n}^E(x)\pi_n^E(a|x) P(s|x,a,\rho_n^E)|
    \\
    &\leq \sum_{s,a,x} |\rho_{n}^A(x)\pi_n^A(a|x) P(s|x,a,\rho_n^A) - \rho_{n}^E(x)\pi_n^A(a|x) P(s|x,a,\rho_n^A)|
    \\ &\quad 
    + \sum_{s,a,x} | \rho_{n}^E(x)\pi_n^A(a|x) P(s|x,a,\rho_n^A) - \rho_{n}^E(x)\pi_n^E(a|x) P(s|x,a,\rho_n^A)|
    \\ &\quad 
    + \sum_{s,a,x} | \rho_{n}^E(x)\pi_n^E(a|x) P(s|x,a,\rho_n^A)- \rho_{n}^E(x)\pi_n^E(a|x) P(s|x,a,\rho_n^E)|
    \\
    &= \underbrace{\sum_x \sum_{s,a} \pi_n^A(a|x) P(s|x,a,\rho_n^A) |\rho_{n}^A(x) - \rho_{n}^E(x)|}_{= \|\rho_{n}^A - \rho_{n}^E\|_1}
    \\ &\quad 
    + \underbrace{\sum_{x,a} \rho_{n}^E(x) |\pi_n^E(a|x) - \pi_n^A(a|x)|}_{= \E_{x\sim \rho^E}[\|\pi_n^E(\cdot|x) - \pi_n^A(\cdot|x)\|_1] \leq \epsilon^\mathrm{BC}} \underbrace{\sum_s P(s|x,a,\rho_n^A)}_{=1}
    \\ &\quad 
    + \sum_{x,a} \rho_{n}^E(x)\pi_n^E(a|x) \underbrace{\sum_s |P(s|x,a,\rho_n^A) - P(s|x,a,\rho_n^E)|}_{\leq L_P \|\rho_{n}^A - \rho_{n}^E\|_1 \text{ by Asm.~\ref{asm:lipschitz}}}
    \\
    &\leq (1+L_P) \|\rho_{n}^A - \rho_{n}^E\|_1 + \epsilon^\mathrm{BC}.
\end{align}
By direct induction, we obtain
\begin{equation}
    \|\rho_{n+1}^A - \rho_{n+1}^E\|_1 \leq \epsilon^\mathrm{BC} \sum_{k=0}^{n} (1+L_P)^k.
\end{equation}
Notice that if $L_P=0$, we retrieve the result in the proof of Lemma~\ref{lem:bc-error-flow}, that is $\|\rho_{n+1}^A - \rho_{n+1}^E\|_1 \leq (n+1)\epsilon^\mathrm{BC}$. If $L_P > 0$, this simplifies to
\begin{equation}
    \|\rho_{n+1}^A - \rho_{n+1}^E\|_1 \leq \epsilon^\mathrm{BC} \frac{(1+L_P)^{n+1}-1}{L_P}.
    \label{eq:titi}
\end{equation}
Summing over time, we obtain the stated result,
\begin{equation}
    \sum_{n=0}^{H-1} \|\rho_{n+1}^A - \rho_{n+1}^E\|_1 \leq \frac{(1+L_P)^H}{L_P^2} \epsilon^\mathrm{BC}.
\end{equation}
\end{proof}

The last technical lemma we provide considers the case when the involved sequences of occupancy measures have the same (arbitrary) policy but different populations (thus different dynamics).

\begin{lemma}
\label{lem:state-distribution-same-policy}
   Recall that $\epsilon^\mathrm{BC} = \max_{n\in\{0,\dots,H-1\}} \E_{s\sim\rho^E}[\|\pi_n^E(\cdot|s) - \pi_n^A(\cdot|s)\|_1]$ and assume that $L_P>0$. Let $\pi$ be an arbitrary policy. We have:
    \begin{align}
        \sum_{n=0}^{H-1}\|\rho_{n}^{(A)\pi} - \rho_{n}^{(E)\pi}\|_1
        \leq \frac{(1+L_P)^H}{L_P^2}\epsilon^\mathrm{BC}.
    \end{align}
\end{lemma}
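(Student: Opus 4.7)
The two state-distribution sequences $\rho^{(A)\pi}$ and $\rho^{(E)\pi}$ share the same (arbitrary) policy $\pi$, so the only per-step discrepancy between them comes from the transition kernel being evaluated at two different populations $\rho^A$ and $\rho^E$. The Lipschitz assumption on $P$ (Asm.~\ref{asm:lipschitz}) converts that discrepancy into the quantity $\|\rho_n^A - \rho_n^E\|_1$, which has already been controlled exponentially by Eq.~\eqref{eq:titi} inside the proof of Lemma~\ref{lem:bc-error-state-distr}. The plan is therefore to (i) establish a one-step recursion
\[
\|\rho_{n+1}^{(A)\pi} - \rho_{n+1}^{(E)\pi}\|_1 \leq \|\rho_n^{(A)\pi} - \rho_n^{(E)\pi}\|_1 + L_P \|\rho_n^A - \rho_n^E\|_1,
\]
(ii) inject the exponential bound on $\|\rho_n^A - \rho_n^E\|_1$ and telescope from the initial condition $\rho_0^{(A)\pi} = \rho_0^{(E)\pi} = \rho_0$, and (iii) sum the resulting inequality over $n \in \{0, \dots, H-1\}$.

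\textbf{Deriving the recursion.} I would expand $\rho_{n+1}^{(A)\pi}(s)$ and $\rho_{n+1}^{(E)\pi}(s)$ via their defining recurrences and insert the hybrid term $\sum_{x,a}\rho_n^{(E)\pi}(x)\pi_n(a|x)P(s|x,a,\rho_n^A)$, splitting the gap by the triangle inequality into two pieces. In the first piece, the kernel is evaluated at the common population $\rho_n^A$, so the inner sums $\sum_s P(s|x,a,\rho_n^A)=1$ and $\sum_a \pi_n(a|x)=1$ marginalise out and the piece collapses to $\|\rho_n^{(A)\pi} - \rho_n^{(E)\pi}\|_1$. In the second piece, the outer weights $\rho_n^{(E)\pi}(x)\pi_n(a|x)$ form a probability distribution and the inner sum $\sum_s |P(s|x,a,\rho_n^A) - P(s|x,a,\rho_n^E)|$ is bounded by $L_P \|\rho_n^A - \rho_n^E\|_1$ via Asm.~\ref{asm:lipschitz}. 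This yields exactly the advertised recursion; it is also the intermediate inequality already reused (under the name \texttt{eq:diff\_pop\_same\_pol}) in the proofs of Theorem~\ref{thm:bound-vanilla-ADV} and the MFC-ADV bound.

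\textbf{Closing the bound.} Substituting $\|\rho_n^A - \rho_n^E\|_1 \leq \epsilon^{\mathrm{BC}}((1+L_P)^n - 1)/L_P$ from Eq.~\eqref{eq:titi} into the recursion and unrolling from $n=0$ turns the right-hand side into a geometric telescope, giving $\|\rho_n^{(A)\pi} - \rho_n^{(E)\pi}\|_1 \leq \epsilon^{\mathrm{BC}}(1+L_P)^n / L_P$ (the $-1$ terms in the geometric series only improve the constant and I would simply discard them). A second summation over $n \in \{0, \dots, H-1\}$ is again geometric and produces the stated $\frac{(1+L_P)^H}{L_P^2}\epsilon^{\mathrm{BC}}$.

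\textbf{Main obstacle.} There is no real conceptual obstacle; the argument is just two nested geometric telescopes. The only delicate point is the choice of hybrid term in the triangle inequality: one must insert the new population $\rho_n^A$ inside $P$ while keeping the weights $\rho_n^{(E)\pi}$, so that the two resulting summands cleanly match (a) the induction hypothesis on $\|\rho_n^{(A)\pi} - \rho_n^{(E)\pi}\|_1$ and (b) the quantity $\|\rho_n^A - \rho_n^E\|_1$ already controlled by Lemma~\ref{lem:bc-error-state-distr}. A symmetric hybrid also works, but a wrong cross-pairing leaves a term involving $\rho_n^{(A)\pi}$ multiplying the kernel gap, which cannot be closed without a separate argument.
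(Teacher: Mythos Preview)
Your proposal is correct and follows essentially the same argument as the paper's proof: derive the one-step recursion via a hybrid term and the Lipschitz assumption on $P$, inject the bound from Eq.~\eqref{eq:titi}, telescope, and sum the resulting geometric series. The only cosmetic difference is the choice of hybrid term (you keep the weights $\rho_n^{(E)\pi}$ and perturb the kernel, whereas the paper keeps the weights $\rho_n^{(A)\pi}$ --- contrary to your closing remark, that pairing also closes cleanly since $\rho_n^{(A)\pi}(x)\pi_n(a|x)$ is itself a probability distribution over $(x,a)$).
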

\begin{proof}
The proof follows a similar idea as in the proof of Lemma~\ref{lem:bc-error-state-distr}. We make use of the definition of occupancy measure to make appear the measures at the previous time step, we add and subtract a term (namely $\rho_{n}^{(\pi^A)\pi}(x) P(s|x,a,\rho_n^{E})$ here) and use the triangle inequality, to eventually bound each of the resulting terms. 
\begin{align}
    \|\rho_{n+1}^{(\pi^A)\pi} - \rho_{n+1}^{(\pi^E)\pi}\|_1
    &= \sum_s |\rho_{n+1}^{(\pi^A)\pi}(s) - \rho_{n+1}^{(\pi^E)\pi}(s)|
    \\
    &= \sum_s |\sum_{x,a} \rho_{n}^{(\pi^A)\pi}(x) \pi_n(a|x) P(s|x,a,\rho_n^{A}) - \rho_{n}^{(\pi^E)\pi}(x) \pi_n(a|x) P(s|x,a,\rho_n^{E}) |
    \\
    &\leq \sum_{s,a,x} \pi_n(a|x) |\rho_{n}^{(\pi^A)\pi}(x) P(s|x,a,\rho_n^{A}) - \rho_{n}^{(\pi^E)\pi}(x)  P(s|x,a,\rho_n^{E})|
    \\
    &\leq \sum_{s,a,x} \pi_n(a|x) |\rho_{n}^{(\pi^A)\pi}(x) P(s|x,a,\rho_n^{A}) - \rho_{n}^{(\pi^A)\pi}(x) P(s|x,a,\rho_n^{E})|
    \\ &\quad 
    + \sum_{s,a,x} \pi_n(a|x) | \rho_{n}^{(\pi^A)\pi}(x) P(s|x,a,\rho_n^{E})- \rho_{n}^{(\pi^E)\pi}(x)  P(s|x,a,\rho_n^{E})|
    \\
    &= \sum_{a,x} \rho_{n}^{(\pi^A)\pi}(x)  \pi_n(a|x) \underbrace{\sum_s |P(s|x,a,\rho_n^{A}) - P(s|x,a,\rho_n^{E})|}_{\leq L_P \|\rho^{A}_n - \rho^E_n\|_1 \text{ by Asm.~\ref{asm:lipschitz}}}
    \\ &\quad
    + \underbrace{\sum_x \sum_{s,a} \pi_n(a|x) P(s|x,a,\rho_n^{E}) |\rho_{n}^{(\pi^A)\pi}(x) - \rho_{n}^{(\pi^E)\pi}(x)|}_{= \|\rho_{n}^{(\pi^A)\pi} - \rho_{n}^{(\pi^E)\pi}\|_1},
    \\
    \text{so }
      \|\rho_{n+1}^{(\pi^A)\pi} - \rho_{n+1}^{(\pi^E)\pi}\|_1 &\leq L_P \|\rho^{A}_n - \rho^E_n\|_1 + \|\rho_{n}^{(\pi^A)\pi} - \rho_{n}^{(\pi^E)\pi}\|_1. \label{eq:diff_pop_same_pol}
\end{align}

If $L_P=0$, this readily implies that $\|\rho_{n}^{(A)\pi} - \rho_{n}^{(E)\pi}\|_1\leq \|\rho_{0}^{(A)\pi} - \rho_{0}^{(E)\pi}\|_1=0$, which is obviously the correct result (if the dynamics does not depend on the population, then the occupancy measures are the same, the underlying policy being the same). 

Now consider the case of interest, $L_P>0$. From Eq.~\eqref{eq:titi} of the proof of Lemma~\ref{lem:bc-error-state-distr}, we know that 
\begin{equation}
    \|\rho^{A}_n - \rho^E_n\|_1 \leq \epsilon^\mathrm{BC} \frac{(1+L_P)^n}{L_P}.
\end{equation}
Injecting this into Eq.~\eqref{eq:diff_pop_same_pol}, we have 
\begin{equation}
    \|\rho_{n+1}^{(\pi^A)\pi} - \rho_{n+1}^{(\pi^E)\pi}\|_1 \leq \epsilon^\mathrm{BC} (1+L_P)^n + \|\rho_{n}^{(\pi^A)\pi} - \rho_{n}^{(\pi^E)\pi}\|_1. 
\end{equation}
By direct induction we obtain
\begin{equation}
    \|\rho_{n+1}^{(\pi^A)\pi} - \rho_{n+1}^{(\pi^E)\pi}\|_1 \leq \epsilon^\mathrm{BC} \sum_{k=0}^n (1+L_P)^k \leq \epsilon^\mathrm{BC} \frac{(1+L_P)^{n+1}}{L_P},
\end{equation}
and summing over time we obtain the stated result:
\begin{equation}
    \sum_{n=0}^{H-1} \|\rho_{n}^{(A)\pi} - \rho_{n}^{(E)\pi}\|_1 \leq \epsilon^\mathrm{BC} \frac{(1+L_P)^{H}}{L_P^2}.
\end{equation}
\end{proof}

\end{document}